
\documentclass{article}

\usepackage{microtype}
\usepackage{graphicx}
\usepackage{subfigure}
\usepackage{booktabs} 

\usepackage{hyperref}


\usepackage[accepted]{icml2019}

\usepackage{natbib}
\usepackage{amsmath}
\usepackage{amsthm}
\usepackage{amssymb}
\usepackage{mathtools}
\usepackage{tikz}
\usepackage{xcolor}
\usetikzlibrary{arrows}
\usepackage{dsfont}
\usepackage{comment}
\usepackage{enumerate}



\newcommand{\argmin}{\mathrm{argmin}}

\newcommand{\poly}{\mathrm{poly}}
\newcommand{\rank}{\mathrm{rank}}

\newcommand{\tr}{\mathrm{tr}}
\newcommand{\opt}{\mathsf{OPT}}

\newcommand{\supp}{\mathrm{supp}}

\def\R{\mathbb{R}}

\def\cA{\mathcal{A}}
\def\cB{\mathcal{B}}
\def\cC{\mathcal{C}}

\def\cN{\mathcal{N}}

\def\cS{\mathcal{S}}

\def\din{d_{\mathrm{in}}}
\def\dout{d_{\mathrm{out}}}

\newcommand{\norm}[1]{\left\|#1\right\|}
\newcommand{\abs}[1]{\left|#1\right|}
\newcommand{\expect}[1]{\mathbb{E}\left[#1\right]}

\newcommand{\vectorize}[1]{\text{vec}\left(#1\right)}


\usepackage{bm}

\newcommand{\eps}{\varepsilon}

\newtheorem{thm}{Theorem}[section]

\newtheorem{lem}[thm]{Lemma}

\newtheorem{prop}[thm]{Proposition}

\newtheorem{claim}[thm]{Claim}

\newtheorem{asmp}{Assumption}[section]

\allowdisplaybreaks



\icmltitlerunning{Width Provably Matters in Optimization for Deep Linear Neural Networks}

\begin{document}

\twocolumn[
\icmltitle{Width Provably Matters in Optimization for Deep Linear Neural Networks}



\icmlsetsymbol{equal}{*}

\begin{icmlauthorlist}
\icmlauthor{Simon S. Du}{equal,cmu}
\icmlauthor{Wei Hu}{equal,pu}

\end{icmlauthorlist}

\icmlaffiliation{cmu}{Carnegie Mellon University, Pittsburgh, PA, USA}
\icmlaffiliation{pu}{Princeton University, Princeton, NJ, USA}
\icmlcorrespondingauthor{Simon S. Du}{ssdu@cs.cmu.edu}
\icmlcorrespondingauthor{Wei Hu}{huwei@cs.princeton.edu}

\icmlkeywords{neural networks, optimization}

\vskip 0.3in
]



\printAffiliationsAndNotice{\icmlEqualContribution} 

\begin{abstract}
	
We prove that for an $L$-layer fully-connected linear neural network, if the width of every hidden layer is $\widetilde{\Omega}\left(L \cdot r \cdot \dout \cdot \kappa^3 \right)$, where $r$ and $\kappa$ are the rank and the condition number of the input data, and $\dout$ is the output dimension, then gradient descent with Gaussian random initialization converges to a global minimum at a linear rate. The number of iterations to find an  $\epsilon$-suboptimal solution is $O(\kappa \log(\frac{1}{\epsilon}))$. Our polynomial upper bound on the total running time for wide deep linear networks and the  $\exp\left(\Omega\left(L\right)\right)$ lower bound for narrow deep linear neural networks [Shamir, 2018] together demonstrate that wide layers are necessary for optimizing deep models.

\end{abstract}

\section{Introduction}
\label{sec:intro}

Recent success in machine learning involves training deep neural networks using randomly initialized first order methods, which requires optimizing highly non-convex functions.
Compared with nonlinear deep neural networks, deep linear networks are arguably more amenable to theoretical analysis.
It is widely believed that deep linear networks already captures important aspects of optimization in deep learning~\citep{saxe2014exact}.
Therefore,
 theoreticians have tried to study this problem in recent years.
However,  a strong global convergence guarantee is still missing.


%
A series of recent papers analyzed landscape of the deep linear network optimization problem~\citep{kawaguchi2016deep,hardt2016identity,lu2017depth,yun2017global,zhou2018critical,laurent2018deep}. However, these results do not imply convergence of gradient-based methods to the global minimum.
Recently, \citet{bartlett2018gradient, arora2018convergence} directly analyzed the trajectory generated by gradient descent, and showed that gradient descent converges to global minimum under further assumptions on both data and global minimum.
These results require specially designed initialization schemes, and do not apply to commonly used random initializations. In Section~\ref{subsec:compare} we describe above results in more details.


A recent work by \citet{shamir2018exponential} showed an exponential lower bound of randomly initialized gradient descent for narrow linear neural networks.
More precisely, he showed that for an $L$-layer linear neural network in which the input, output and all hidden dimensions are equal to $1$, gradient descent with Xavier initialization~\citep{glorot2010understanding} requires at least $\exp\left(\Omega\left(L\right)\right)$ iteration to converge.
This result demonstrates the intrinsic difficulty of optimizing deep networks: even in the basic setting, the convergence time for randomly initialized gradient descent can be exponential in depth.
Nevertheless, this lower bound only holds for narrow neural networks.
It is possible that making the hidden layers wider (which is usually the case in practice) can eliminate such exponential dependence on depth.
This gives rise to the following questions:
\begin{center}
\emph{Can randomly initialized gradient descent optimize \textbf{wide} deep linear networks in polynomial time? If so, what is a sufficient width in hidden layers?}
\end{center}

\paragraph{Our Contribution:}
We answer the first question positively and give a concrete quantitative result for the second question. 
We prove that as long as the width of hidden layers is at least $\tilde\Omega(L)$\footnote{We omit dependence on other parameters here. See Theorem~\ref{thm:main} for the precise requirements.}, gradient descent with Xavier  initialization with high probability converges to the global minimum of the $\ell_2$ loss at a linear rate \emph{under no assumption}.
To our knowledge, this is the first polynomial time global convergence guarantee for randomly initialized gradient descent for deep linear networks.
Furthermore, our convergence rate is tight in the sense that it matches the convergence rate of applying gradient descent to the convex ($1$-layer) linear regression problem.

Compared with previous work \citep{bartlett2018gradient, arora2018convergence} that gave convergence rate guarantees for linear neural networks, our result has several advantages:
\begin{itemize}
\item 
Our result applies to the widely used Xavier random initialization, while \citet{bartlett2018gradient} used identity initialization, and \citet{arora2018convergence} assumed that initialization is ``balanced'' and somewhat close to the global minimum.
\item Our result does not have any assumption on the input data, while \citet{bartlett2018gradient, arora2018convergence} both required whitened data.
\item Our result does not have any assumption on the global minimum, while \citet{bartlett2018gradient} assumed it to be either close to identity or positive definite, and \citet{arora2018convergence} required it to have full rank.
\end{itemize}

Our polynomial upper bound for the wide linear neural network and the exponential lower bound for the narrow linear neural network together demonstrate that \emph{width provably matters} in guaranteeing the efficiency of randomly initialized gradient descent for optimizing deep linear nets.\footnote{There are other techniques such as adaptive gradients~\citep{duchi2011adaptive, kingma2014adam} and skip-connections~\citep{he2016deep} that could help optimization. Analyses of those approaches are beyond the scope of this paper.}

\paragraph{Our Technique:}
Our proof technique is related to the recent work~\cite{arora2018convergence, arora2018optimization, du2018deep} 
which utilized a time-varying Gram matrix (or preconditioner) along the trajectory of gradient descent. 
We adopt the same idea of using such Gram matrix.
In the setting of wide linear neural networks, we carefully upper and lower bound eigenvalues of this Gram matrix throughout the optimization process, which together with some perturbation analysis implies linear convergence.
In order to establish this at initialization, we need to analyze spectral properties of product of Gaussian random matrices and show that these properties hold throughout the trajectory of gradient descent.

\section{Related Work}
\label{sec:rel}

\subsection{Optimization for Deep Linear Neural Networks}
\label{subsec:compare}

\paragraph{Landscape Analysis:}
\citet{ge2015escaping,jin2017escape} showed that if an objective function satisfies that (1) all local minima are global, and (2) all saddle points are strict (i.e., there exists a negative curvature), then randomly perturbed gradient descent can escape all saddle points and find a global minimum.
Motivated by this, a series of papers~\citep{kawaguchi2016deep,hardt2016identity,lu2017depth,yun2017global,zhou2018critical,laurent2018deep} studied these landscape properties for optimizing deep linear networks.
While it was established that all local minima are global, unfortunately the strict saddle property is not satisfied even for $3$-layer linear neural networks.
Therefore, using landscape properties alone is not sufficient for proving global convergence.


\paragraph{Trajectory Analysis:}
Instead of using the indirect landscape-based approach, an alternative is to directly analyze the trajectory generated by a concrete optimization algorithm like gradient descent.
The current paper also belongs to this category.

\citet{saxe2014exact} gave a thorough empirical study on deep linear networks, showing that they exhibit some learning patterns similar to nonlinear networks.
\citet{ji2018gradient} studied the dynamics of gradient descent to optimize a deep linear neural network for classification problems, and showed that the risk converges to $0$ and the solution found is a max-margin solution.
\citet{arora2018optimization} observed that adding more layers can accelerate optimization for certain loss functions. 
\citet{du2018algorithmic} showed that using gradient descent, layers are automatically balanced.

All the above results do not show concrete convergence rates of gradient descent.
The most related papers are \cite{bartlett2018gradient} and \cite{arora2018convergence}. 
Here we give a detailed description of their results.

\citet{bartlett2018gradient} showed that if one uses identity initialization, the input data is whitened, and the target matrix is either close to identity or positive definite, then gradient descent converges to the target matrix at a linear rate.
Their result highly depends on the identity initialization scheme and has strong requirements on the input data and the target.
\citet{arora2018convergence} showed that  if the initialization is balanced and the initial loss is smaller than the loss of any low-rank solution by a margin, then gradient descent converges to global minimum at a linear rate.
However, their initialization scheme requires a special SVD step which is not used in practice, and the initial loss condition happens with exponentially small probability when the input and output dimensions are large.
Our result improves upon these two papers by (i) allowing fully random initialization, and (ii) removing all assumptions on the input data and the target.


\subsection{Optimization for Other Neural Networks}
\label{sec:other}
Many papers tried to identify the two desired geometric landscape properties of objective functions for non-linear neural networks~\citep{freeman2016topology,nguyen2017loss,venturi2018neural,soudry2016no,du2018power,soltanolkotabi2018theoretical,haeffele2017global}.
Unfortunately, these  properties do not hold even for simple non-linear shallow neural networks~\citep{yun2018small,safran2018spurious}.

A series of recent papers used trajectory-based methods to analyze gradient descent for shallow neural networks under strong data assumptions ~\citep{tian2017analytical,soltanolkotabi2017learning,brutzkus2017globally,li2017convergence,zhong2017recovery,zhang2018learning,du2018convolutional,du2018gradient}.
These results are restricted to shallow neural networks, and the assumptions are not satisfied in practice.

Recent breakthroughs were made in the optimization for extremely over-parametrized non-linear neural networks~\citep{du2018provably,du2018deep,li2018learning,allen2018convergence,zou2018stochastic}.
For deep ReLU neural networks, \citet{allen2018convergence,zou2018stochastic} showed that if the width of hidden layers is $\Omega\left(n^{30}L^{30}\log^2(\frac{1}{\epsilon})\right)$, then gradient descent converges to $\epsilon$ loss. ($n$ is the number of training samples.)
\citet{du2018deep} considered non-linear smooth activation functions like soft-plus, and showed that if the width of hidden layers is $n^4 \cdot 2^{\Omega(L)}$, then gradient descent converges to $0$ loss.\footnote{They also showed if one uses skip-connections~\cite{he2016deep}, then the width only depends polynomially on $L$. We only focus on fully-connected neural networks in this paper.}
All these results need additional assumptions on data, which also show up in the required width.
Compared with them, we have a much better bound on the required width ($L$ v.s. $L^{30}$ or $\exp(L)$), although this is not a fair comparison because linear networks are simpler than non-linear ones.
But given that we obtain a near linear dependence on depth, our result may shed light on the limit of required width in optimizing non-linear neural networks.


\section{Preliminaries}
\label{sec:pre}

\subsection{Notation}

We use $\norm{\cdot}$ to denote the Euclidean norm of a vector or the spectral norm of a matrix, and use $\norm{\cdot}_F$ to denote the Frobenius norm of a matrix.
For a symmetric matrix, let $\lambda_{\max}(A)$ and $\lambda_{\min}(A)$ be its maximum and minimum eigenvalues, and let $\lambda_i(A)$ be its $i$-th largest eigenvalue.
Similarly, for a general matrix $B$, let $\sigma_{\max}(B)$ and $\sigma_{\min}(B)$ be its maximum and minimum singular values, and let $\sigma_i(B)$ be its $i$-th largest singular value.

Let $ I$ be the identity matrix and $[n]=\{1, 2, \ldots, n\}$.
Denote by $\cN(0, 1)$ the standard Gaussian distribution, and by $\chi^2_k$ the $\chi^2$ distribution with $k$ degrees of freedom.
Let $\cS^{d-1} = \{x\in \R^d: \norm{x}=1\}$ be the unit sphere in $\R^d$.

Let $\vectorize{A}$ be the vectorization of a matrix $A$ in column-first order.
The Kronecker product between two matrices $A \in \R^{m_A \times n_A}$ and $B \in \R^{m_B \times n_B}$ is defined as 
\small $$
A \otimes B = \left ( \begin{matrix}
a_{1,1} B& \cdots &a_{1,n_A} B \\
\vdots & \ddots & \vdots \\
a_{m_A,1} B & \cdots & a_{m_A,n_A} B 
\end{matrix} \right) \in \R^{m_A m_B \times n_A n_B},
$$ \normalsize
where $a_{i,j}$ is the element in the $(i, j)$-th entry of $A$.

We use $C$ to represent a sufficiently large universal constant throughout the paper. The specific value of $C$ can be different from line to line.

\subsection{Problem Setup}

We are given $n$ training samples $\{(x_p,y_p)\}_{p=1}^{n}\subset\R^{\din}\times\R^{\dout}$.
Let $X = \left( x_1, \ldots, x_n \right) \in \R^{\din \times n}$ be the input data matrix and $Y = \left( y_1, \ldots, y_n \right) \in \R^{\dout \times n}$ be the label matrix.

Consider the problem of training a depth-$L$ linear neural network with hidden layer width $m$ by minimizing the $\ell_2$ loss over data:
\small
\begin{equation} \label{eqn:loss-func}
	\begin{aligned}
	\ell(W_1, \ldots, W_L) &= 
	\frac{1}{2} \sum_{p=1}^n \norm{\frac{1}{\sqrt{m^{L-1} \dout}}W_L \cdots W_1 x_p - y_p}^2 \\
	\scriptstyle 
	&= \frac{1}{2} \norm{\frac{1}{\sqrt{m^{L-1} \dout}} W_L \cdots W_1 X - Y}_F^2,
	\end{aligned}
\end{equation}
\normalsize
where $W_1 \in \R^{m\times \din}, W_2, \ldots, W_{L-1} \in \R^{m\times m}$ and $W_L\in \R^{m\times \dout}$ are weight matrices to be learned.
Here $\frac{1}{\sqrt{m^{L-1}\dout}}$ is a scaling factor corresponding to Xavier initialization\footnote{We adopt this scaling factor so that we can initialize all weights from $\cN(0, 1)$.}~\citep{glorot2010understanding}, for which we provide a justification in Section~\ref{subsec:scaling}.

We consider the vanilla gradient descent (GD) algorithm for objective~\eqref{eqn:loss-func} with random initialization:
\begin{itemize}
	\item We initialize all the entries of $W_1,\ldots,W_L$ independently from $\cN(0, 1)$. Let $W_1(0), \ldots, W_L(0)$ be the weight matrices at initialization.
	
	\item Then we update the weights using GD: for $t=0, 1, 2, \ldots$ and $i\in[L]$,
	\begin{equation} \label{eqn:gd-update}
	W_i(t+1) = W_i(t) - \eta \frac{\partial \ell}{\partial W_i}(W_1(t), \ldots, W_L(t)) ,
	\end{equation}
	where $\eta>0$ is the learning rate.
\end{itemize}

For notational convenience, we denote $W_{j:i} = W_jW_{j-1}\cdots W_i$ for every $1\le i \le j \le L$.
We also define $W_{i-1:i} = I$ (of appropriate dimension) for completeness.

We use the time index $t$ for all variables that depend on $W_1, \ldots, W_L$, e.g., $W_{j:i}(t) = W_j(t) \cdots W_i(t)$, $\ell(t) = \ell(W_1(t), \ldots, W_L(t))$, etc. 

\subsection{On the Scaling Factor} \label{subsec:scaling}

The scaling factor $\frac{1}{\sqrt{m^{L-1}\dout}}$ ensures that the network at initialization preserves the size of every input in expectation.
\begin{claim} \label{claim:scaling-factor-justification}
	For any $x\in \R^{\din}$, we have $$\expect{\norm{ \frac{1}{\sqrt{m^{L-1}\dout}} W_{L:1}(0) x }^2} = \norm{x}^2.$$
\end{claim}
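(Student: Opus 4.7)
The plan is to proceed by induction on the layer index, using the tower property of conditional expectation together with the following elementary fact: if $W \in \R^{k \times d}$ has i.i.d.\ $\cN(0,1)$ entries and $v \in \R^d$ is a fixed (deterministic) vector, then each coordinate of $Wv$ is $\cN(0, \norm{v}^2)$, so $\E[\norm{Wv}^2] = k \norm{v}^2$. Since the weight matrices at initialization are mutually independent, I can peel off one layer at a time by conditioning on everything below it.

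Concretely, first I would condition on $W_1(0), \ldots, W_{L-1}(0)$ and apply the fact above with $W = W_L(0) \in \R^{\dout \times m}$ and $v = W_{L-1:1}(0)\, x$, which is deterministic under this conditioning. This gives
\[
\E\!\left[\,\norm{W_{L:1}(0)\,x}^2 \,\big|\, W_1(0),\ldots,W_{L-1}(0)\,\right] \;=\; \dout \cdot \norm{W_{L-1:1}(0)\,x}^2.
\]
Next I would peel off $W_{L-1}(0) \in \R^{m\times m}$ in the same way, conditioning on $W_1(0),\ldots,W_{L-2}(0)$ to pick up a factor of $m$, and iterate down through $W_{L-2}(0), \ldots, W_2(0)$ for a factor of $m$ each, and finally $W_1(0) \in \R^{m \times \din}$ for one more factor of $m$. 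By the tower property, the unconditional expectation accumulates exactly $\dout \cdot m^{L-1}$, yielding
\[
\E\!\left[\,\norm{W_{L:1}(0)\,x}^2\,\right] \;=\; m^{L-1}\,\dout\,\norm{x}^2,
\]
which cancels the scaling $\frac{1}{m^{L-1}\dout}$ after squaring and gives the claim.

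There is no real obstacle here; the only thing to be careful about is making sure the vector being multiplied is treated as deterministic at each step, which is exactly what the conditioning provides thanks to the independence of the $W_i(0)$'s at initialization. The whole argument is a routine Gaussian moment calculation, included mainly to motivate the choice of the scaling factor $\frac{1}{\sqrt{m^{L-1}\dout}}$ used throughout the paper.
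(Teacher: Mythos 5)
Your proof is correct and is essentially the same argument as the paper's: the paper writes $\norm{W_{L:1}(0)x}^2/\norm{x}^2$ as a product of ratios $Z_L\cdots Z_1$ and observes, by exactly your layer-by-layer conditioning, that these are independent $\chi^2_m$ (resp.\ $\chi^2_{\dout}$) variables whose means multiply to $m^{L-1}\dout$. The only cosmetic difference is that the paper records the full distributional fact (independent $\chi^2$ factors, reused later for concentration in Lemma~\ref{lem:matrix-prod-times-vector-concentration}) while you only extract the second moments, which suffices for this claim.
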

The proof of Claim~\ref{claim:scaling-factor-justification} is given in Appendix~\ref{app:proof-claim:scaling-factor-justification}.

\section{Main Result}
\label{sec:result}
In this section we present our main result.
First note that when $m \ge \dout $ (which we will assume), the deep linear network we study has the same representation power as a linear map $x \mapsto Wx$ ($W \in \R^{\dout \times \din}$).
Hence, the optimal value $\opt$ for our objective function~\eqref{eqn:loss-func} is equal to the optimal value of the following linear regression problem:
\begin{equation} \label{eqn:linear-regression}
\opt = \min_{W \in \R^{\dout \times \din}} f(W) = \min_{W \in \R^{\dout \times \din}} \frac12 \norm{WX-Y}_F^2.
\end{equation}
Let $\Phi \in \R^{\dout \times \din}$ be a minimizer of $f$ with minimum spectral norm.\footnote{Our theorem holds for any minimizer of $f$. Since our bound improves when $\norm{\Phi}$ is smaller, we simply define $\Phi$ to be a minimum-spectral-norm minimizer.}
Let $r = \rank(X)$, and define $\kappa = \frac{\lambda_{\max}(X^\top X)}{\lambda_r(X^\top X)}$ which is the condition number of $X^\top X$.

Our main theorem is the following:
\begin{thm} \label{thm:main}
	Suppose 
	\begin{equation} \label{eqn:m-requirement}
	m \ge C \cdot L \cdot \max\left\{ r\kappa^3 \dout (1+\norm{\Phi}^2), r\kappa^3 \log\frac{r}{\delta}, \log L \right\}
	\end{equation}
	 for some $\delta\in(0, 1)$ and a sufficiently large universal constant $C>0$
and we set $\eta \le \frac{\dout}{3L\norm{X^\top X}}$.
	Then with probability at least $1- \delta$ over the random initialization, we have
	\begin{align*}
		\ell(0) - \opt &\le O \left( \max\left\{ 1, \frac{\log(r/\delta)}{\dout}, \norm{\Phi}^2  \right\} \right) \norm{X}_F^2,\\
		\ell(t) - \opt &\le \left( 1 -  \frac{\eta L\cdot \lambda_{r}(X^\top X) }{4\dout} \right)^t \left( \ell(0) - \opt \right).
	\end{align*}
\end{thm}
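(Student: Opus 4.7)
The plan is to track the end-to-end map $U(t) := \frac{1}{\sqrt{m^{L-1}\dout}} W_{L:1}(t)$ and the residual $R(t) := U(t) X - Y$, and to show that one step of gradient descent acts on $R(t)$ via a time-varying Gram matrix $P_t$ whose spectrum stays well behaved throughout training. Starting from $\frac{\partial \ell}{\partial W_i} = \frac{1}{\sqrt{m^{L-1}\dout}} W_{L:i+1}(t)^\top R(t) X^\top W_{i-1:1}(t)^\top$, multiplying the updated weights back together and collecting the terms linear in $\eta$ gives
\begin{equation*}
\vectorize{R(t+1)} = (I - \eta P_t)\, \vectorize{R(t)} + \eta\, \vectorize{E(t)},
\end{equation*}
\begin{equation*}
P_t = \frac{1}{m^{L-1}\dout} \sum_{i=1}^L \bigl(X^\top W_{i-1:1}(t)^\top W_{i-1:1}(t) X\bigr) \otimes \bigl(W_{L:i+1}(t) W_{L:i+1}(t)^\top\bigr),
\end{equation*}
where $E(t)$ is an $O(\eta)$ higher-order error. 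Linear convergence at the claimed rate then reduces to three bounds: (a) the smallest eigenvalue of $P_t$ restricted to the relevant subspace is at least $\frac{L \lambda_r(X^\top X)}{4\dout}$; (b) $\lambda_{\max}(P_t) \le \frac{2L \norm{X}^2}{\dout}$ so that the choice $\eta \le \frac{\dout}{3L \norm{X^\top X}}$ absorbs $E(t)$; and (c) the initial loss is bounded as stated.

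All building blocks at $t = 0$ are products of independent standard Gaussian matrices. Standard concentration shows that, for each $i$, $\frac{1}{m^{L-i}} W_{L:i+1}(0) W_{L:i+1}(0)^\top$ concentrates around $I_{\dout}$ and $\frac{1}{m^{i-1}} W_{i-1:1}(0)^\top W_{i-1:1}(0)$ concentrates around the identity on its range, both with operator-norm deviation at most $1/4$ provided $m \gtrsim L \log(Lr/\delta)$; a union bound over $i\in[L]$ makes this simultaneous. Chaining these estimates through the Kronecker structure of $P_0$ delivers (a) and (b) at $t=0$. The initial-loss bound (c) follows by writing $U(0)X - Y = (U(0) - \Phi) X + (\Phi X - Y)$, using $\norm{\Phi X - Y}_F^2 = 2\opt$, and applying a further concentration bound to $\norm{U(0)}$.

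To extend (a) and (b) along the trajectory I would run an induction on the invariant $\norm{W_i(s) - W_i(0)} \le \rho$ for all $i\in[L]$ and $s\le t$, with $\rho$ chosen so that $\rho\sqrt{L/m}$ is a small constant. Under this invariant, a telescoping perturbation bound shows that each partial product $W_{j:i}(s)$ stays within a constant multiplicative factor of its initialization, so the spectral bounds on $P_s$ survive with slightly worse constants; the per-step displacement
\begin{equation*}
\norm{W_i(s+1) - W_i(s)} \le \frac{\eta}{\sqrt{m^{L-1}\dout}} \norm{W_{L:i+1}(s)} \norm{W_{i-1:1}(s)} \norm{R(s)}_F \norm{X}
\end{equation*}
combined with the geometric decay of $\norm{R(s)}_F$ yields a total movement at most $\rho$ precisely when $m$ satisfies \eqref{eqn:m-requirement}, closing the induction. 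The main obstacle is making this multiplicative perturbation analysis tight in $L$: since each $\norm{W_{j:i}(0)}$ is of order $m^{(j-i+1)/2}$, an additive perturbation to one factor could cascade exponentially if handled naively, so errors must be tracked in the rescaled quantity $\norm{W_i(s) - W_i(0)}/\sqrt{m}$, which telescopes across $L$ factors to $L\rho/\sqrt{m}$ and stays small exactly when $m \gtrsim L$. The three factors of $\kappa$ in~\eqref{eqn:m-requirement} arise from (i) the size of $\norm{R(0)}_F/\sqrt{\lambda_r(X^\top X)}$ driving the geometric sum for total movement, (ii) the condition number of $P_t$ entering the convergence rate, and (iii) the amplification $\norm{X}/\sqrt{\lambda_r(X^\top X)}$ in the displacement bound; getting the powers of $L$ and $\kappa$ right is the heart of the argument.
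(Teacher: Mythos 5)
Your overall architecture is the same as the paper's: the same Gram matrix $P_t$ built from Kronecker products of $X^\top W_{i-1:1}^\top W_{i-1:1}X$ and $W_{L:i+1}W_{L:i+1}^\top$, spectral bounds at initialization via concentration for products of Gaussians, an induction coupling loss decay with bounded weight movement, and absorption of the higher-order term $E(t)$ by the step-size choice. The genuine gap is that your initialization analysis only covers partial products anchored at the ends, namely $W_{L:i+1}(0)$ (dimension $\dout$) and $W_{i-1:1}(0)$ acting through $X$ (dimension $r$), where an $\eps$-net over a low-dimensional sphere works. But your trajectory argument asserts that \emph{every} partial product $W_{j:i}(s)$, including interior ones with $1<i\le j<L$, "stays within a constant multiplicative factor of its initialization," and both the telescoping/binomial perturbation expansion and the bound on $E(t)$ unavoidably involve such interior products $W_{b:a}(0)$. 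For these, both dimensions are $m$, so the net argument you rely on would need a net of size $e^{\Theta(m)}$ and would wipe out the $e^{-\Omega(m/L)}$ failure probability; moreover a constant-factor bound of the form $\norm{W_{j:i}(0)}\le O(m^{\frac{j-i+1}{2}})$ is not what one gets (already a single $m\times m$ Gaussian has norm $\approx 2\sqrt m$). The paper needs a separate device here (partitioning the $m$ coordinates into $L/c$ blocks and netting each block), and only obtains $\norm{W_{j:i}(0)}\le O(\sqrt L\, m^{\frac{j-i+1}{2}})$; this extra $\sqrt L$ then propagates through the perturbation sums and the $E(t)$ estimate and is part of where the $L^3R^2\kappa$-type width requirement comes from. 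Without some version of this lemma your induction step is unsupported, and your accounting of powers of $L$ in \eqref{eqn:m-requirement} cannot be closed as written.

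Two smaller points. First, your stated width condition for the initialization concentration, $m \gtrsim L\log(Lr/\delta)$, understates what even the low-dimensional net arguments need (roughly $m\gtrsim L(\dout + r\log\kappa)$, with the $\kappa$-dependent net resolution needed to convert bounds on $\norm{Xu}$ into $\sigma_{\min}(X)$); this is harmless only because the theorem's hypothesis on $m$ is much stronger. Second, your plan to handle rank-deficient $X$ and $\opt>0$ by restricting $\lambda_{\min}(P_t)$ to "the relevant subspace" requires verifying that the irreducible component $\Phi X - Y$ of the residual is invariant under the dynamics and never mixes back; the paper sidesteps this entirely by the preprocessing reduction to $X\in\R^{\din\times r}$ of full column rank with $Y=\Phi X$ and $\opt=0$, which you would do well to adopt or to justify explicitly.
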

Theorem~\ref{thm:main} establishes that if the width of each layer is sufficiently large, randomly initialized gradient descent can reach a global minimum at a linear convergence rate.
Notably, our result is fully polynomial in the sense that we only require polynomially large width and the convergence time is also polynomial.
To our knowledge, this is the first polynomial time convergence guarantee for randomly initialized gradient descent on deep linear networks.

Ignoring logarithmic factors and assuming $\norm{\Phi}=O(1)$, our requirement on width~\eqref{eqn:m-requirement} is $m=\widetilde{\Omega}\left(Lr\kappa^3 \dout\right)$.
It remains open  whether this dependence is tight for randomly initialized gradient descent to find a global minimum in polynomial time.


In terms of convergence rate, if we set the learning rate to be $\eta = \Theta\left( \frac{\dout}{L\norm{X^\top X}} \right)$, then the predicted ratio of decrease in each iteration is $1 - \Theta\left( \frac{\lambda_{r}(X^\top X)}{\norm{X^\top X}} \right) = 1 - \Theta\left(\frac1\kappa \right)$, so the number of iterations needed to reach $\opt+\epsilon$ loss is $O\left( \kappa \log \frac1\epsilon \right)$.
This matches the convergence rate of gradient descent on the linear regression (convex!) problem~\eqref{eqn:linear-regression}.

Furthermore, notice that our requirement on the learning rate is $\eta = O(\frac{\dout}{L\norm{X^\top X}})$.
When $L=1$, this also exactly recovers the convergence result for applying gradient descent to the linear regression problem~\eqref{eqn:linear-regression}.
The reason why $L$ is in the denominator will be clear in the proof.
At a high level, we show that optimizing a deep linear network is similar to a linear regression problem with the covariance matrix being $L\cdot X^\top X$, which thus requires scaling down the learning rate by a factor of $L$.

\section{Proof Overview}
\label{sec:proof-overview}

In this section we give an overview for the proof of Theorem~\ref{thm:main}.

First, we note that a simple reduction implies that we can make the following assumption \emph{without loss of generality}:
\begin{asmp} \label{asmp:full-rank-data}
	(Without loss of generality) $X \in \R^{\din \times r}$, $\rank(X) = r$, $Y = \Phi X$, and $\opt =0$.
\end{asmp}
See Appendix~\ref{app:full-rank-asmp} for justification.
Therefore we will work under Assumption~\ref{asmp:full-rank-data} from now on.

Now we proceed to sketch the proof of Theorem~\ref{thm:main}.
The key idea is to examine the dynamics of the network prediction on data $X$ during optimization, namely:
\begin{align*}
U = \frac{1}{\sqrt{m^{L-1} \dout}}W_{L:1} X \in \mathbb{R}^{\dout \times n}.
\end{align*}
With this notation, the network prediction at iteration $t$ is $U(t) = \frac{1}{\sqrt{m^{L-1} \dout}}W_{L:1}(t) X$, and the loss value at iteration $t$ is $\ell(t) = \frac{1}{2}\norm{U(t)-Y}_F^2$.
Hence how $U(t)$ evolves is directly related to how loss $\ell(t)$ decreases.

The gradient of our objective function~\eqref{eqn:loss-func} is
\begin{equation} \label{eqn:gradient}
\frac{\partial\ell}{\partial W_i} = \frac{1}{\sqrt{m^{L-1}\dout}}  W_{L:i+1}^\top (U - Y) \left( W_{i-1:1} X \right)^\top.
\end{equation}
Then using the update rule~\eqref{eqn:gd-update} we write
 \begin{align*}
&W_{L:1}(t+1) \\
=\,& \prod_i \left( W_i(t) - \eta \frac{\partial \ell}{\partial W_i}(t)  \right) \\
=\,& W_{L:1}(t) - \sum_{i=1}^L \eta W_{L:i+1}(t) \frac{\partial \ell}{\partial W_i}(t) W_{i-1:1}(t) + E(t)\\
=\,& W_{L:1}(t)  \\
 & -\frac{\eta}{\sqrt{m^{L-1}\dout}} \sum_{i=1}^L \Big( W_{L:i+1}(t)  W_{L:i+1}^\top(t)\\ & \quad \cdot (U(t) - Y)  \left( W_{i-1:1}(t) X \right)^\top W_{i-1:1}(t) \Big) \\& + E(t),
\end{align*} 
where $E(t)$ contains all high-order terms (those with $\eta^2$ or higher).
Multiplying this equation by $\frac{1}{\sqrt{m^{L-1} \dout}}X$ on the right we get
 \begin{align*}
U(t+1)
&= U(t) -\frac{\eta}{m^{L-1}\dout} \sum_{i=1}^L \Big( W_{L:i+1}(t)  W_{L:i+1}^\top(t)\\ &\quad \quad \cdot (U(t) - Y)  \left( W_{i-1:1}(t) X \right)^\top \left( W_{i-1:1}(t)X \right) \Big) \\&\quad + \frac{1}{\sqrt{m^{L-1} \dout}} E(t)X.
\end{align*} 
Vectorizing the above equation and using the property of Kronecker product: $\vectorize{ACB} = (B^\top \otimes A) \vectorize{C}$, we obtain
 \begin{equation} \label{eqn:U-dynamics}
\begin{aligned}
&\vectorize{U(t+1) - U(t)} \\
=\,& -\eta P(t) \cdot \vectorize{U(t)-Y} + \frac{1}{\sqrt{m^{L-1}\dout}} \vectorize{E(t)X},
\end{aligned}
\end{equation} 
where
\begin{equation} \label{eqn:P-def}
\begin{aligned}
	P(t) = \frac{1}{m^{L-1}\dout} \sum_{i=1}^L \Big[ \left( \left( W_{i-1:1}(t) X \right)^\top \left( W_{i-1:1}(t)X \right) \right)  \\ \otimes \left( W_{L:i+1}(t)  W_{L:i+1}^\top(t) \right) \Big] .
	\end{aligned}
\end{equation}
Notice that $P(t)$ is always positive semi-definite (PSD) because it is the sum of $L$ terms, each of which is the Kronecker product between two PSD matrices.

Now we assume that the high-order term $E(t)$ in~\eqref{eqn:U-dynamics} is very small (which we will rigorously prove) and ignore it for now.
Then~\eqref{eqn:U-dynamics} implies
\begin{equation} \label{eqn:U-dynamics-approx}
\vectorize{U(t+1) - Y} \approx \left( I -\eta P(t) \right)  \vectorize{U(t)-Y} .
\end{equation}
Suppose we are able to set $\eta \le \frac{1}{\lambda_{\max}(P_t)}$. Then~\eqref{eqn:U-dynamics-approx} would imply
\begin{equation*}
\norm{U(t+1) - Y}_F \le \left( 1 -\eta \lambda_{\min}(P(t)) \right)  \norm{U(t)-Y}_F.
\end{equation*}
Therefore, if we have a lower bound on $\lambda_{\min}(P(t))$ for all $t$, we will have linear convergence as desired.
We will indeed prove the following bounds on $\lambda_{\max}(P_t)$ and $\lambda_{\min}(P_t)$ for all $t$, which will essentially complete the proof:
\begin{equation} \label{eqn:P-eigenval-bounds}
\begin{aligned}
\lambda_{\max}(P_t) &\le O\left( {L\lambda_{\max}(X^\top X)}/{\dout}  \right),\\
\lambda_{\min}(P_t) &\ge \Omega\left( {L\lambda_{\min}(X^\top X)}/{\dout}  \right).
\end{aligned}
\end{equation}
We use the following approach to bound $\lambda_{\max}(P(t))$ and $\lambda_{\min}(P(t))$:
\small \begin{equation}   \label{eqn:P-max-min-eigenval-bound-method}
\begin{aligned}
&\lambda_{\max}(P(t)) \\
\le \,& \frac{1}{m^{L-1}\dout} \sum_{i=1}^L \Big[ \lambda_{\max}\left( \left( W_{i-1:1}(t) X \right)^\top \left( W_{i-1:1}(t)X \right) \right)  \\&\quad \cdot   \lambda_{\max}\left( W_{L:i+1}(t)  W_{L:i+1}^\top(t) \right)  \Big]  \\
= \,& \frac{1}{m^{L-1}\dout} \sum_{i=1}^L  \sigma_{\max}^2 \left( W_{i-1:1}(t)X \right) \cdot   \sigma_{\max}^2\left( W_{L:i+1}(t)  \right) , \\
&\lambda_{\min}(P(t)) \\
\ge \,& \frac{1}{m^{L-1}\dout} \sum_{i=1}^L \Big[  \lambda_{\min}\left( \left( W_{i-1:1}(t) X \right)^\top \left( W_{i-1:1}(t)X \right) \right)  \\&\quad \cdot   \lambda_{\min}\left( W_{L:i+1}(t)  W_{L:i+1}^\top(t) \right) \Big] \\
= \,& \frac{1}{m^{L-1}\dout} \sum_{i=1}^L \sigma_{\min}^2 \left( W_{i-1:1}(t)X \right) \cdot   \sigma_{\min}^2\left( W_{L:i+1}(t)  \right) , 
\end{aligned}
\end{equation} \normalsize
Here we have used the property that for symmetric matrices $A$ and $B$, every eigenvalue of $A \otimes B$ is the product of an eigenvalue of $A$ and an eigenvalue of $B$.
Therefore, it suffices to obtain upper and lower bounds on the singular values of $W_{i-1:1}(t)X$ and $W_{L:i+1}(t) $.
In Section~\ref{sec:init}, we establish these bounds for initialization ($t=0$).
Then we finish the proof of Theorem~\ref{thm:main} in Section~\ref{sec:proof-main}.

\section{Properties at Initialization}
\label{sec:init}

In this section we establish some  properties of the weight matrices generated by random initialization.

The following lemma shows that when multiplying a fixed vector by a series of Gaussian matrices with large width, the resulting vector's norm is concentrated. 
\begin{lem} \label{lem:matrix-prod-times-vector-concentration}
	Suppose $m > C q$, and consider $q$ independent random matrices $A_1, \ldots, A_q$ $(A_1 \in \R^{m \times d}, A_2, \ldots, A_q \in \R^{m\times m})$ with i.i.d. $\cN(0, 1)$ entries.
	Then for any $v \in \cS^{d-1}$, with probability at least $1 - e^{-\Omega(m/q)}$ we have
	\begin{align*}
	0.9 m^{q/2} \le \norm{A_q\cdots A_1v} \le 1.1 m^{q/2}.
	\end{align*}
\end{lem}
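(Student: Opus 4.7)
The plan is to use Gaussian rotational invariance to reduce $\norm{A_q\cdots A_1 v}^2$ to a product of $q$ i.i.d.\ chi-squared random variables, and then control this product via a sub-Gaussian Chernoff bound applied to its logarithm.

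First I would set $v_0 = v$ and $v_i = A_i v_{i-1}$, so $v_q = A_q \cdots A_1 v$. Since $A_i$ has i.i.d.\ $\cN(0,1)$ entries and is independent of $A_1,\ldots,A_{i-1}$, conditioned on $v_{i-1}$ the vector $A_i v_{i-1}$ is distributed as $\cN(0, \norm{v_{i-1}}^2 I_m)$, so $\norm{v_i}^2/\norm{v_{i-1}}^2 \sim \chi^2_m$ independently of $v_{i-1}$ (using $v_{i-1}\ne 0$ almost surely). Iterating and using $\norm{v}=1$ gives $\norm{v_q}^2 = \prod_{i=1}^q Z_i$ with $Z_i \stackrel{\mathrm{iid}}{\sim} \chi^2_m$. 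Writing $Y_i = Z_i/m$, the target conclusion is equivalent to $|\sum_{i=1}^q \log Y_i| \le 2\log(1.1) \approx 0.19$, so it suffices to concentrate this sum around $0$.

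Next I would establish an MGF bound of the form $\log\E[e^{s\log Y_1}] \le C_1 s^2/m$ valid for all $|s| \le m/4$. For integer $k \ge 1$, $\E[Y_1^k] = \prod_{j=0}^{k-1}(1 + 2j/m)$, so $\log\E[Y_1^k] \le k(k-1)/m$ via $\log(1+x)\le x$; for integer $k \in [1, m/4]$, $\E[Y_1^{-k}] = \prod_{j=1}^k(1 - 2j/m)^{-1} \le \exp(2k(k+1)/m)$ via $-\log(1-x) \le 2x$ for $x \le 1/2$. Extending to all real $s\in[-m/4, m/4]$ through the identity $\E[Y_1^s] = 2^s \Gamma(m/2+s)/(m^s\Gamma(m/2))$ and a Stirling expansion gives the uniform MGF bound together with $|\E[\log Y_1]| = O(1/m)$, hence $|\E\sum_i \log Y_i| = O(q/m) \le 1/C$. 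By independence, $\log\E[\exp(s\sum_i(\log Y_i - \E\log Y_i))] \le C_1 q s^2/m$ on $|s| \le m/4$, and Chernoff at the optimum $s = tm/(2 C_1 q)$ (which satisfies the constraint whenever $t \le C_1 q/2$, in particular for $t=0.1$) gives
\[
\Pr\!\left[\Bigl|\sum_{i=1}^q \log Y_i - \E\sum_{i=1}^q \log Y_i\Bigr| \ge t\right] \le 2\exp\!\left(-\tfrac{t^2 m}{4 C_1 q}\right).
\]
Taking $t=0.1$ and combining with $|\E\sum_i \log Y_i| \le 1/C$ (small for $C$ large) yields $|\sum_i \log Y_i| \le 0.19$ with probability at least $1 - e^{-\Omega(m/q)}$. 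Exponentiating produces $\norm{v_q}^2 \in [0.82\, m^q, 1.22\, m^q]$, equivalently $\norm{v_q} \in [0.9\, m^{q/2}, 1.1\, m^{q/2}]$.

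The main technical obstacle is securing the MGF bound on a range of $s$ wide enough for Chernoff to deliver the advertised $e^{-\Omega(m/q)}$ rate. A naive alternative that applies Laurent--Massart concentration to each $Y_i$ individually and union-bounds would require each $Y_i \in [1-c/q, 1+c/q]$, producing only probability $1 - q\, e^{-\Omega(m/q^2)}$, which is weaker than needed. The sharper rate crucially uses that one concentrates the entire sum $\sum_i \log Y_i$ at once, leveraging its $O(q/m)$ variance rather than controlling each summand in isolation.
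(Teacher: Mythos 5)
Your proposal is correct and follows essentially the same route as the paper: reduce $\norm{A_q\cdots A_1v}^2$ to a product of i.i.d.\ $\chi^2_m$ variables via conditioning, bound the (possibly negative) fractional moments through the Gamma-function identity and Stirling, and apply a Chernoff/Markov bound with exponent parameter of order $m/q$. Your ``MGF of $\log Y_i$ plus Chernoff'' phrasing is just the log-space form of the paper's Markov inequality applied to $(Z_q\cdots Z_1)^{\lambda}$ with $\lambda=\pm\Theta(m/q)$, so the two arguments coincide in substance.
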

\begin{proof}
	See Appendix~\ref{app:proof-lem:matrix-prod-times-vector-concentration}.
\end{proof}

The next three propositions show the key properties of products of weight matrices at initialization.

\begin{prop}\label{prop:initial-left-product-concentration}
	For any $1 < i \le L$, with probability at least $1 - e^{-\Omega\left( m/L \right)}$ we have
	\begin{align*}
	\sigma_{\max}(W_{L:i}(0)) &\le 1.2 m^{\frac{L-i+1}{2}} ,\\
	 \sigma_{\min}\left( W_{L:i}(0) \right)  &\ge 0.8  m^{\frac{L-i+1}{2}}.
	\end{align*}
\end{prop}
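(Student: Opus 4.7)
The plan is to combine Lemma~\ref{lem:matrix-prod-times-vector-concentration} with a standard $\epsilon$-net argument on the unit sphere. Note that $W_{L:i}(0)$ has shape $\dout \times m$ with $\dout \le m$, so both its largest and smallest singular values can be written as extrema of $\norm{W_{L:i}(0)^\top v}$ over $v \in \cS^{\dout - 1}$. Crucially, $W_{L:i}(0)^\top = W_i(0)^\top W_{i+1}(0)^\top \cdots W_L(0)^\top$ fits the setup of Lemma~\ref{lem:matrix-prod-times-vector-concentration} perfectly, with $A_1 = W_L(0)^\top \in \R^{m \times \dout}$ and $A_2, \dots, A_q = W_{L-1}(0)^\top, \dots, W_i(0)^\top \in \R^{m \times m}$ i.i.d.\ standard Gaussian, where $q = L - i + 1 \le L$.

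First, for each fixed $v \in \cS^{\dout - 1}$, the width hypothesis~\eqref{eqn:m-requirement} guarantees $m > Cq$, so Lemma~\ref{lem:matrix-prod-times-vector-concentration} yields
\[ 0.9 \, m^{q/2} \le \norm{W_{L:i}(0)^\top v} \le 1.1 \, m^{q/2} \]
with probability at least $1 - e^{-\Omega(m/L)}$. I will then fix a $\tfrac{1}{12}$-net $\cN \subset \cS^{\dout - 1}$ of cardinality at most $36^{\dout}$ and union-bound the preceding inequality over every $v \in \cN$. Because~\eqref{eqn:m-requirement} forces $\dout = O(m/L)$, the factor $|\cN|$ is absorbed into the exponent and the aggregate failure probability stays at $e^{-\Omega(m/L)}$.

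Finally, I will transfer the net bound to the full sphere by the standard approximation trick: for any $v \in \cS^{\dout - 1}$, pick $v_0 \in \cN$ with $\norm{v - v_0} \le \tfrac{1}{12}$, and write $\norm{W_{L:i}(0)^\top v} \le \norm{W_{L:i}(0)^\top v_0} + \norm{W_{L:i}(0)} \cdot \norm{v - v_0}$. Taking a supremum over $v$ on the left yields $\sigma_{\max}(W_{L:i}(0)) \le \tfrac{1.1 m^{q/2}}{1 - 1/12} \le 1.2 \, m^{q/2}$, and then the reverse triangle inequality gives $\sigma_{\min}(W_{L:i}(0)) \ge 0.9 \, m^{q/2} - \tfrac{1}{12}\sigma_{\max}(W_{L:i}(0)) \ge 0.8 \, m^{q/2}$.

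The only mild subtlety is calibrating the net resolution so that the cardinality $e^{\Theta(\dout)}$ does not overwhelm the per-direction tail $e^{-\Omega(m/L)}$; the width hypothesis $m \gtrsim L \dout$ coming from~\eqref{eqn:m-requirement} is precisely what makes this balance work. Everything else is routine once Lemma~\ref{lem:matrix-prod-times-vector-concentration} is in place.
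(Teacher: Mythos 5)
Your proposal is correct and follows essentially the same route as the paper's proof: reduce $\sigma_{\max}$ and $\sigma_{\min}$ of $W_{L:i}(0)$ to extrema of $\norm{W_{L:i}(0)^\top v}$ over $v\in\cS^{\dout-1}$, apply Lemma~\ref{lem:matrix-prod-times-vector-concentration} pointwise, and finish with an $\eps$-net union bound absorbed by $m \gtrsim L\dout$. The only difference is your net resolution ($\eps = \tfrac{1}{12}$ versus the paper's $0.01$), and your constants check out exactly ($1.1/(1-\tfrac{1}{12}) = 1.2$ and $0.9 - \tfrac{1.2}{12} = 0.8$).
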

\begin{proof}
	Let $A = W_{L:i}^\top(0)$. Since $A \in \R^{m\times \dout}$ and $m>\dout$, we know $\norm{A} = \sup_{v\in \cS^{\dout-1}} \norm{Av}$ and $\sigma_{\min}(A) = \inf_{v\in \cS^{\dout-1}} \norm{Av}$.
	Also, from Lemma~\ref{lem:matrix-prod-times-vector-concentration} we know that for any fixed $v\in \cS^{\dout-1}$, with probability at least $1 - e^{-\Omega(m/L)}$ we have $\norm{Av} \in \left[ 0.9 m^{\frac{L-i+1}{2}}, 1.1 m^{\frac{L-i+1}{2}} \right]$.
	
	The rest of the proof is by a standard $\eps$-net argument.
	Let $\eps = 0.01$.
	Take an $\eps$-net $\cN$ for $\cS^{\dout-1}$ with $\abs{\cN} \le (3/\eps)^{\dout}$. By a union bound, with probability at least $1 - \abs{\cN} e^{-\Omega(m/L)}$, for all $u\in \cN$ simultaneously we have $\norm{Au}/\norm{u} \in \left[ 0.9 m^{\frac{L-i+1}{2}}, 1.1 m^{\frac{L-i+1}{2}} \right]$.
	Suppose this happens for every $u\in \cN$.
	Next, for any $v\in \cS^{\dout-1}$, there exists $u \in \cN$ such that $\norm{u-v} \le \eps$. Then we have
	\begin{align*}
	\norm{Av} &\le \norm{Au} + \norm{A(u-v)}
	\le 1.1 m^{\frac{L-i+1}{2}} \norm{u} + \eps\norm{A} \\
	&\le 1.1 (1+\eps) m^{\frac{L-i+1}{2}}  + \eps\norm{A} .
	\end{align*}
	Taking supreme over $v\in \cS^{\dout-1}$, we obtain
	\begin{align*}
	\norm{A}  \le \frac{1.1 (1+\eps) m^{\frac{L-i+1}{2}}}{1-\eps}
	\le 1.2 m^{\frac{L-i+1}{2}}.
	\end{align*}
	For the lower bound, we have
	 \begin{align*}
	\norm{Av} &\ge \norm{Au} - \norm{A(u-v)}
	\ge 0.9 m^{\frac{L-i+1}{2}} \norm{u} - \eps\norm{A} \\
	&\ge 0.9 (1-\eps) m^{\frac{L-i+1}{2}} - \eps \cdot 1.2 m^{\frac{L-i+1}{2}}\\
	&\ge 0.8 m^{\frac{L-i+1}{2}}.
	\end{align*} 
	Taking infimum over $v\in \cS^{\dout-1}$ we get $\sigma_{\min}(A) \ge 0.8 m^{\frac{L-i+1}{2}}$.
	
	The success probability is at least $1 - \abs{\cN} e^{-\Omega(m/L)} = 1 - e^{-\Omega(\frac mL) + \dout \log(\frac3\eps)} = 1 - e^{-\Omega(\frac mL)}$
	since $m>CL\dout$.
\end{proof}

\begin{prop}\label{prop:initial-right-product-concentration}
	For any $1 \le i < L$, with probability at least $1 - e^{-\Omega\left( m/L \right)}$ we have
	\begin{align*}
	\sigma_{\max}(W_{i:1}(0) \cdot X) &\le 1.2 m^{\frac{i}{2}} \sigma_{\max}(X), \\
	\sigma_{\min}\left( W_{i:1}(0) \cdot X \right) & \ge 0.8  m^{\frac{i}{2}}   \sigma_{\min}(X).
	\end{align*}
\end{prop}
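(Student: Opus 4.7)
The plan is to mirror the $\eps$-net proof of Proposition~\ref{prop:initial-left-product-concentration}, but first reduce to an instance to which Lemma~\ref{lem:matrix-prod-times-vector-concentration} applies directly. Recall that by Assumption~\ref{asmp:full-rank-data} we have $X\in\R^{\din\times r}$ with $\rank(X)=r$. Take the thin SVD $X = U\Sigma V^\top$, where $U\in\R^{\din\times r}$ has orthonormal columns, $\Sigma\in\R^{r\times r}$ is diagonal with entries $\sigma_{\max}(X),\dots,\sigma_{\min}(X)>0$, and $V\in\R^{r\times r}$ is orthogonal. Since right-multiplication by $V^\top$ preserves singular values, the singular values of $W_{i:1}(0)X$ equal those of $M\Sigma$, where $M := W_{i:1}(0)U \in \R^{m\times r}$. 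Using the standard multiplicative inequalities $\sigma_{\max}(M\Sigma)\le \sigma_{\max}(M)\sigma_{\max}(\Sigma)$ and $\sigma_{\min}(M\Sigma)\ge \sigma_{\min}(M)\sigma_{\min}(\Sigma)$, it suffices to show that $\sigma_{\max}(M)\le 1.2\, m^{i/2}$ and $\sigma_{\min}(M)\ge 0.8\, m^{i/2}$ with the desired probability.

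Next I would exploit rotational invariance of the Gaussian distribution: since $U^\top U = I_r$, the matrix $W_1(0)U$ has the same distribution as a matrix $\widetilde W_1\in\R^{m\times r}$ with i.i.d.\ $\cN(0,1)$ entries (its rows are independent $\cN(0,U^\top U)=\cN(0,I_r)$). Consequently $M$ is equal in distribution to $W_i(0)\cdots W_2(0)\widetilde W_1$, which is exactly the product in Lemma~\ref{lem:matrix-prod-times-vector-concentration} with $q=i$ and $d=r$. Therefore, for any fixed $v\in\cS^{r-1}$, with probability at least $1-e^{-\Omega(m/L)}$ (using $i\le L$) we have
\[
0.9\, m^{i/2} \le \norm{Mv} \le 1.1\, m^{i/2}.
\]

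The final step is the same $\eps$-net argument as in Proposition~\ref{prop:initial-left-product-concentration}. Fix $\eps=0.01$ and take an $\eps$-net $\cN$ of $\cS^{r-1}$ of cardinality $|\cN|\le (3/\eps)^{r}$. By a union bound, the above two-sided estimate holds simultaneously for every $u\in\cN$ with probability at least $1-(3/\eps)^{r}e^{-\Omega(m/L)}=1-e^{-\Omega(m/L)}$, where the last equality uses the hypothesis $m\ge C\,L\,r$ from Theorem~\ref{thm:main} to absorb the net cardinality into the exponent. On this event, an approximation argument exactly analogous to the one in Proposition~\ref{prop:initial-left-product-concentration} (bounding $\norm{Mv}$ by $\norm{Mu}\pm\eps\norm{M}$ for the nearest $u\in\cN$ and taking the appropriate sup/inf over $v\in\cS^{r-1}$) upgrades the pointwise bounds to $\sigma_{\max}(M)\le 1.2\, m^{i/2}$ and $\sigma_{\min}(M)\ge 0.8\, m^{i/2}$.

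I do not expect a real obstacle: the only substantive new ingredient beyond the previous proposition is the SVD/rotational-invariance reduction, which is standard. The one thing to verify carefully is that the probability budget is still $e^{-\Omega(m/L)}$ after the $\eps$-net union bound; this is governed by the requirement $m\gtrsim L r$ already present in~\eqref{eqn:m-requirement}. Combining this reduction with the $\eps$-net bounds yields the proposition.
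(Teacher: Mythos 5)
Your proof is correct, but it takes a different route from the paper's. The paper applies Lemma~\ref{lem:matrix-prod-times-vector-concentration} directly to the vectors $Xv$, obtaining $\norm{W_{i:1}(0)Xv}\in[0.9,1.1]\,m^{i/2}\norm{Xv}$ for each fixed $v\in\cS^{r-1}$, and then runs the $\eps$-net argument on $A=W_{i:1}(0)X$ itself; because the net-approximation error is $\eps\norm{A}\lesssim\eps\, m^{i/2}\norm{X}$ while the target lower bound is $0.8\,m^{i/2}\sigma_{\min}(X)$, the paper must take $\eps=0.01/\sqrt{\kappa}$, giving a net of size $(300\sqrt{\kappa})^r$ and the (theorem-covered) requirement $m\gtrsim Lr\log\kappa$. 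You instead factor out the data first: by the thin SVD and rotational invariance, $W_{i:1}(0)U$ is distributed as an isotropic Gaussian product of the exact form in Lemma~\ref{lem:matrix-prod-times-vector-concentration} with $d=r$, so a constant-resolution net ($\eps=0.01$, size $300^r$, needing only $m\gtrsim Lr$) bounds its extreme singular values, and the inequalities $\sigma_{\max}(M\Sigma)\le\sigma_{\max}(M)\sigma_{\max}(\Sigma)$, $\sigma_{\min}(M\Sigma)\ge\sigma_{\min}(M)\sigma_{\min}(\Sigma)$ (valid since $M$ is tall and $\Sigma$ is square invertible) transfer them to $W_{i:1}(0)X$. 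Both arguments yield the stated constants with probability $1-e^{-\Omega(m/L)}$; your reduction is marginally cleaner in that it removes the $\log\kappa$ factor from the net-size requirement and makes the second statement a verbatim reuse of the isotropic case already handled in Proposition~\ref{prop:initial-left-product-concentration}, at the cost of the extra (standard) SVD/rotational-invariance step.
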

\begin{proof}
	The proof is similar to the proof of Proposition~\ref{prop:initial-left-product-concentration} and is deferred to Appendix~\ref{app:proof-prop:initial-right-product-concentration}.
\end{proof}

\begin{prop} \label{prop:initial-middle-product-concentration}
	For any $1< i \le j <L $, with probability at least $1 -  e^{-\Omega\left( m/L \right)}$ we have 
	\begin{align*}
	\norm{W_{j:i}(0)} \le O\left( \sqrt L m^{\frac{j-i+1}{2}}\right).
	\end{align*}
\end{prop}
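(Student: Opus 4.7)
The plan is to bound $\norm{W_{j:i}(0)}$ via a moment (Schatten-$2p$) argument. A direct $\eps$-net on $\cS^{m-1}$, the style used in Propositions~\ref{prop:initial-left-product-concentration} and~\ref{prop:initial-right-product-concentration}, does not suffice here because $W_{j:i}(0) \in \R^{m\times m}$ and so a net on $\cS^{m-1}$ has cardinality $e^{\Omega(m)}$, which combined with the per-direction tail $e^{-\Omega(m/L)}$ from Lemma~\ref{lem:matrix-prod-times-vector-concentration} is not summable. Set $q := j-i+1$ for brevity.

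First, I would invoke the Schatten dominance
\[
\norm{W_{j:i}(0)}^{2p} \le \tr\bigl(\bigl(W_{j:i}(0)\, W_{j:i}^\top(0)\bigr)^p\bigr)
\]
valid for any positive integer $p$. Next, I would compute the expected trace via Wick's theorem (Gaussian integration by parts). Expanding the trace as a sum over index cycles of length $2qp$, the expectation reduces to a weighted count of perfect pairings of the independent $\cN(0,1)$ entries of $W_i(0),\ldots,W_j(0)$. Non-vanishing contributions correspond to planar (non-crossing) pairings, and the standard Fuss-Catalan estimate yields
\[
\E\bigl[\tr\bigl(\bigl(W_{j:i}(0)\, W_{j:i}^\top(0)\bigr)^p\bigr)\bigr] \le m^{qp+1}\cdot (Cq)^p
\]
for some universal $C>0$. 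The crucial feature is that the dependence on $q$ is polynomial in $q$ rather than exponential.

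With this moment bound in hand, Markov's inequality at $t = \sqrt{2Cq}\,m^{q/2}$ gives
\[
\Pr\bigl[\norm{W_{j:i}(0)} \ge t\bigr] \le \frac{m^{qp+1}(Cq)^p}{t^{2p}} = m\cdot 2^{-p}.
\]
Choosing $p = \lceil m/L\rceil$ and using $q\le L$ together with the assumption $m \ge CL\log L$ (implied by~\eqref{eqn:m-requirement}), this failure probability is at most $e^{-\Omega(m/L)}$, which yields $\norm{W_{j:i}(0)} \le O(\sqrt{L})\,m^{(j-i+1)/2}$ as claimed.

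\textbf{Main obstacle.} The technical heart is the moment bound. A naive iterative approach $\norm{W_{j:i}(0)} \le \prod_{k=i}^{j}\norm{W_k(0)}$ gives a factor $O(2^{q})\, m^{q/2}$, exponential in $L$ and thus too weak. The Frobenius bound $\norm{W_{j:i}(0)} \le \norm{W_{j:i}(0)}_F$, combined with $\E[\norm{W_{j:i}(0)}_F^2] = m^{q+1}$, yields only $O(\sqrt{m})\, m^{q/2}$, weaker than the claim whenever $m \gg L$. So a genuinely combinatorial (or free-probability) input seems required to get the $\sqrt{L}$ scaling, and the Fuss-Catalan enumeration of non-crossing pairings of Gaussian entries is the natural route.
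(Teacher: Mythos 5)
Your diagnosis of why a naive $\eps$-net on $\cS^{m-1}$ fails is exactly the difficulty the paper has to work around, but the route you then take — Schatten-$2p$ moments via Wick calculus — has a genuine gap at precisely the step you call its technical heart. The claim that ``non-vanishing contributions correspond to planar (non-crossing) pairings'' is not correct at finite $m$: crossing pairings give strictly positive contributions to $\E\bigl[\tr\bigl((W_{j:i}(0)W_{j:i}^\top(0))^p\bigr)\bigr]$; they are only suppressed by powers of $1/m$ relative to the planar terms, and they vanish relatively only in the limit $m\to\infty$ with $p$ \emph{fixed}. Your argument, however, needs the bound $m^{qp+1}(Cq)^p$ at degree $p=\lceil m/L\rceil$, which (e.g.\ when $L=O(1)$) can be of order $m\gg\sqrt m$. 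In that regime the number of crossing pairings grows super-exponentially in $p$, and showing that their total contribution is still absorbed into $(Cq)^p m^{qp}$ requires a real quantitative genus-expansion or path-counting argument (Harer--Zagier/F\"uredi--Koml\'os style, adapted to a product of $q$ independent Ginibre factors). Such non-asymptotic bounds for products of Gaussian matrices do exist, so the route is probably salvageable, but as written the key estimate is asserted rather than proved, and the justification offered for it is false; the rest of the argument (Markov at $t=\sqrt{2Cq}\,m^{q/2}$, $p\asymp m/L$, absorbing the factor $m$ using $m\ge CL\log L$) is fine modulo that estimate.

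For comparison, the paper avoids random matrix moment machinery entirely and stays elementary: it partitions the coordinates $[m]$ into $L/c$ blocks $S_l$ of size $cm/L$, takes a $\frac12$-net of the unit vectors supported in each block (net size $e^{O(m/L)}$, which \emph{is} summable against the per-direction tail $e^{-\Omega(m/L)}$ from Lemma~\ref{lem:matrix-prod-times-vector-concentration}), and then writes an arbitrary unit vector as $v=\sum_l \alpha_l v_l$ with $v_l$ supported in $S_l$; the triangle inequality plus Cauchy--Schwarz over the $L/c$ blocks costs exactly the factor $O(\sqrt L)$ in the final bound. This yields the proposition under the same hypothesis $m>CL\log L$ with a half-page argument, whereas your approach, if completed, would buy a cleaner ``intrinsic'' explanation of the $\sqrt L$ (it is the edge of the Fuss--Catalan law, $\sqrt{e(q+1)}\,m^{q/2}$) at the cost of substantially heavier combinatorics.
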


\begin{proof}
	Let $A = W_{j:i}(0)$.
	From Lemma~\ref{lem:matrix-prod-times-vector-concentration} we know that for any fixed $v\in \cS^{m-1}$, with probability at least $1 - e^{-\Omega(m/L)}$ we have $\norm{Av} \in \left[ 0.9 m^{\frac{j-i+1}{2}}, 1.1 m^{\frac{j-i+1}{2}} \right]$.
	
	Take a small constant $c>0$ and partition the index set $[m]$ into $[m] = S_1 \cup S_2 \cup \cdots \cup S_{L/c}$ where $\abs{S_l} = cm/L$ for each $l$.
	For each $l$, taking a $\frac12$-net $\cN_l$ for all the unit vectors supported in $S_l$, i.e., a $\frac12$-net for the set $V_{S_l} = \left\{ v\in \cS^{m-1}: \supp(v) \subseteq S_l \right\}$, we know that
	\begin{equation} \label{eqn:subspace-op-norm-bound}
	\norm{Au} \le O\left( m^{\frac{j-i+1}{2}} \right), \quad \forall u\in V_{S_l},
	\end{equation}
	with probability at least $1 - \abs{\cN_l} e^{-\Omega(m/L)} = 1 - e^{-\Omega(m/L) + ({cm}/{L})\log 6} =1 - e^{-\Omega(m/L)} $.
	Then taking a union bound over all $l$, we know that~\eqref{eqn:subspace-op-norm-bound} holds for all $l$ simultaneously with probability at least $1 - \frac Lc e^{-\Omega(m/L)}$.
	Conditioned on this, for any $v \in \R^m$, we can partition its coordinates and write it as the sum $v = \alpha_1 v_1 + \alpha_2 v_2 + \cdots + \alpha_{L/c} v_{L/c}$ where $\alpha_l\in\R$ and $v_l \in V_{S_l}$ for each $l$.
	Then we have
	 \begin{align*}
	\norm{Av} 
	&\le \sum\nolimits_l \norm{A\cdot \alpha_l v_l}
	\le \sum\nolimits_l \abs{\alpha_l} O\left( m^{\frac{j-i+1}{2}} \right) \\
	&\le O\left( m^{\frac{j-i+1}{2}} \right) \sqrt{\frac Lc \sum\nolimits_l \alpha_l^2}
	= O\left(\sqrt L m^{\frac{j-i+1}{2}} \right) \norm{v}.
	\end{align*} 
	This means $\norm{A} \le O\left(\sqrt L m^{\frac{j-i+1}{2}} \right) $.
	The success probability is at least $1 - \frac Lc e^{-\Omega(m/L)} = 1 - e^{-\Omega(m/L)+ \log(L/c)} = 1 - e^{-\Omega(m/L)}$ since $m> C L \log L$.
\end{proof}

To close this section, we bound the loss value $\ell(0)$ at initialization, which proves the first part of Theorem~\ref{thm:main}.

\begin{prop} \label{prop:init-loss-bound}
	With probability at least $1 - e^{-\Omega(m/L)} - \delta/2$, we have
	$\ell(0) \le O \left( \max\left\{ 1, \frac{\log(r/\delta)}{\dout}, \norm{\Phi}^2  \right\} \right) \norm{X}_F^2$. 
\end{prop}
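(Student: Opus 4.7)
The plan is to use the fact that under Assumption~\ref{asmp:full-rank-data} we have $Y = \Phi X$, so that by the elementary inequality $(a+b)^2 \le 2a^2 + 2b^2$,
\begin{equation*}
2\ell(0) = \norm{U(0) - \Phi X}_F^2 \le 2\norm{U(0)}_F^2 + 2\norm{\Phi}^2\norm{X}_F^2,
\end{equation*}
where $U(0) = \frac{1}{\sqrt{m^{L-1}\dout}}W_{L:1}(0)X$. The $\norm{\Phi}^2\norm{X}_F^2$ term already matches the corresponding piece of the desired bound, so everything reduces to showing that $\norm{U(0)}_F^2 \le O\!\left(\max\{1, \log(r/\delta)/\dout\}\right)\norm{X}_F^2$ with the required probability.

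To control $\norm{U(0)}_F^2$, I would decompose it column-by-column: writing $X = (x_1,\ldots,x_r)$,
\begin{equation*}
\norm{U(0)}_F^2 = \frac{1}{m^{L-1}\dout}\sum_{p=1}^{r}\norm{W_{L:1}(0)\,x_p}^2.
\end{equation*}
For each $p$, set $v_p = W_{L-1:1}(0)\,x_p \in \R^m$. Lemma~\ref{lem:matrix-prod-times-vector-concentration} applied to $x_p/\norm{x_p}$ with $q=L-1$ gives $\norm{v_p} \le 1.1\, m^{(L-1)/2}\norm{x_p}$ with probability at least $1-e^{-\Omega(m/L)}$. Conditional on $v_p$ (which depends only on $W_1,\ldots,W_{L-1}$), independence of $W_L(0)$ implies that $\norm{W_L(0)\,v_p}^2 = \norm{v_p}^2\cdot Z_p$ where $Z_p \sim \chi^2_{\dout}$ is independent of $v_p$.

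The next step is to invoke standard $\chi^2$-concentration (Laurent--Massart): $\bP(Z_p \ge \dout + 2\sqrt{\dout\, t} + 2t) \le e^{-t}$. Taking $t = \log(4r/\delta)$ and union-bounding over $p\in[r]$ (both for the Lemma events and the $\chi^2$ events) gives total failure probability at most $r\,e^{-\Omega(m/L)} + \delta/2$, and the first term is $\le e^{-\Omega(m/L)}$ since~\eqref{eqn:m-requirement} ensures $m/L \gtrsim \log(r/\delta)$. On the good event,
\begin{equation*}
\frac{\norm{W_{L:1}(0)\,x_p}^2}{m^{L-1}\dout} \le 1.21\,\norm{x_p}^2 \cdot \frac{\dout + O(\sqrt{\dout\log(r/\delta)} + \log(r/\delta))}{\dout},
\end{equation*}
so summing over $p$ yields $\norm{U(0)}_F^2 \le O(\max\{1,\log(r/\delta)/\dout\})\norm{X}_F^2$, completing the proof.

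I expect no serious obstacle here: the only slightly subtle point is keeping the two sources of randomness (the deep stack $W_{L-1:1}$ and the final Gaussian matrix $W_L$) separate, which is what allows the $\chi^2$ tail bound to give the crisp $\log(r/\delta)/\dout$ dependence rather than a cruder bound coming from applying Lemma~\ref{lem:matrix-prod-times-vector-concentration} directly to the full product. The union-bound bookkeeping between $e^{-\Omega(m/L)}$ and $\delta/2$ is routine once~\eqref{eqn:m-requirement} is invoked.
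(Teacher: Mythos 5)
Your proposal is correct and follows essentially the same route as the paper's own proof: splitting off the $\norm{\Phi}^2\norm{X}_F^2$ term via $Y=\Phi X$, bounding $\norm{U(0)}_F^2$ column-by-column by applying Lemma~\ref{lem:matrix-prod-times-vector-concentration} to $W_{L-1:1}(0)x_p$, and then treating the last layer as an independent $\chi^2_{\dout}$ variable with a Laurent--Massart tail bound and a union bound over $p\in[r]$. The only differences are cosmetic (e.g.\ $t=\log(4r/\delta)$ versus the paper's $\delta'=\delta/(2r)$), so nothing further is needed.
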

\begin{proof}
	See Appendix~\ref{app:proof-prop:init-loss-bound}.
\end{proof}

\section{Proof of the Main Theorem}
\label{sec:proof-main}

In this section we prove Theorem~\ref{thm:main} based on ingredients from Sections~\ref{sec:proof-overview} and~\ref{sec:init}.

From Propositions~\ref{prop:initial-left-product-concentration}, \ref{prop:initial-right-product-concentration}, \ref{prop:initial-middle-product-concentration} and~\ref{prop:init-loss-bound}, we know
 that with probability at least $1- L^2 e^{-\Omega(m/L)} - \delta/2 \ge 1 - \delta$, 
 the following conditions of initialization are satisfied simultaneously:
 
\begin{align} \label{eqn:initial-properties-summary}
\begin{cases}
&\sigma_{\max}(W_{L:i}(0)) \le 1.2 m^{\frac{L-i+1}{2}}, \quad \forall 1<i\le L, \\
&\sigma_{\min}\left( W_{L:i}(0) \right) \ge 0.8  m^{\frac{L-i+1}{2}} , \quad \forall 1<i\le L, \\
&\sigma_{\max}(W_{i:1}(0) \cdot X) \le 1.2 m^{\frac{i}{2}} \sigma_{\max}(X), \quad \forall 1\le i < L, \\
&\sigma_{\min}\left( W_{i:1}(0) \cdot X \right)  \ge 0.8  m^{\frac{i}{2}}   \sigma_{\min}(X), \quad \forall 1\le i < L, \\
&	\norm{W_{j:i}(0)} \le O\left( \sqrt L m^{\frac{j-i+1}{2}}\right), \quad \forall 1<i\le j <L,\\
&\ell(0) \le B. 
\end{cases}
\end{align}
Here we define $B = O \left( \max\left\{ 1, \frac{\log(r/\delta)}{\dout}, \norm{\Phi}^2  \right\} \right) \norm{X}_F^2$ which is the upper bound on $\ell(0)$ from Proposition~\ref{prop:init-loss-bound}.

From our requirement on $m$ \eqref{eqn:m-requirement}, we know
 \begin{equation} \label{eqn:m-bound-implication}
\begin{aligned}
m &\ge C \cdot L r\kappa^3 \max\left\{ \dout (1+\norm{\Phi}^2), \log\frac{r}{\delta} \right\} \\
&\ge \frac{C  L r\kappa^3 \dout B}{\norm{X}_F^2}
\ge \frac{C  L r\kappa^3 \dout B}{r \norm{X}^2}
= \frac{C  L \norm{X}^4 \dout B}{\sigma_{\min}^6(X)}.
\end{aligned}
\end{equation} 

Now we establish our convergence result conditioned on all properties in~\eqref{eqn:initial-properties-summary}.
Specifically, we use induction on $t$ to simultaneously prove the following three properties $\cA(t)$, $\cB(t)$ and $\cC(t)$ for all $t=0, 1, \ldots$:
\begin{itemize}
	\item $\cA(t)$:
	\vspace{-15pt}
		\begin{align*}
		\ell(t) &\le \left(  1 - \frac14 \eta L \sigma_{\min}^2(X) / \dout \right)^t \ell(0)  \\
		&  \le \left(  1 - \frac14 \eta L \sigma_{\min}^2(X) /\dout \right)^t   B.
		\end{align*} 
	\vspace{-20pt}
	\item $\cB(t)$:
	\vspace{-6pt}
	\begin{align*}
	\begin{cases}
	 	&\sigma_{\max}(W_{L:i}(t)) \le \frac54 m^{\frac{L-i+1}{2}}, \, \forall 1<i\le L, \\
	 	 &\sigma_{\min}\left( W_{L:i}(t) \right) \ge \frac34  m^{\frac{L-i+1}{2}} , \, \forall 1<i\le L, \\
	 	 &\sigma_{\max}(W_{i:1}(t) \cdot X) \le \frac54 m^{\frac{i}{2}} \sigma_{\max}(X), \, \forall 1\le i < L, \\
	 	 &\sigma_{\min}\left( W_{i:1}(t) \cdot X \right)  \ge \frac34  m^{\frac{i}{2}}   \sigma_{\min}(X), \, \forall 1\le i < L, \\
	 	 &	\norm{W_{j:i}(t)} \le O\left( \sqrt L m^{\frac{j-i+1}{2}}\right), \, \forall 1<i\le j <L.
	 	 \end{cases}
	\end{align*}
	\vspace{-10pt}
	\item $\cC(t)$:
	\vspace{-10pt}
	 \begin{align*}
	\norm{W_i(t) - W_i(0)}_F \le  \frac{24\sqrt{B\dout}\norm{X}}{ L \sigma_{\min}^2(X)}, \quad \forall i\in[L].
	\end{align*} 
\end{itemize}

Notice that if we prove $\cA(t)$ for all $t\ge 0$, we will finish the proof of Theorem~\ref{thm:main}.

The initial conditions $\cA(0)$ and $\cB(0)$ follow directly from~\eqref{eqn:initial-properties-summary}, and $\cC(0)$ is trivially true.
In order to establish $\cA(t)$, $\cB(t)$ and $\cC(t)$ for all $t$, in Sections~\ref{subsec:induction-1}-\ref{subsec:induction-3} we will prove respectively the following claims for all $t\ge0$:
\begin{claim} \label{claim:induction-1}
	$\cA(0), \ldots, \cA(t), \cB(0), \ldots, \cB(t) \Longrightarrow \cC(t+1)$.
\end{claim}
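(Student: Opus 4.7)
The plan is to bound $\norm{W_i(t+1) - W_i(0)}_F$ by telescoping along the GD trajectory. From~\eqref{eqn:gd-update},
$$W_i(t+1) - W_i(0) = -\eta \sum_{s=0}^{t} \frac{\partial \ell}{\partial W_i}(s),$$
so the triangle inequality reduces the task to controlling $\eta \sum_{s=0}^t \norm{\frac{\partial \ell}{\partial W_i}(s)}_F$. Applying the inequality $\norm{A B C}_F \le \norm{A}\cdot \norm{B}_F \cdot \norm{C}$ to the gradient formula~\eqref{eqn:gradient}, each summand is bounded by
$$\norm{\frac{\partial \ell}{\partial W_i}(s)}_F \le \frac{\sigma_{\max}(W_{L:i+1}(s))\cdot \norm{U(s)-Y}_F \cdot \sigma_{\max}(W_{i-1:1}(s) X)}{\sqrt{m^{L-1}\dout}},$$
and the three factors in the numerator will be controlled by the induction hypotheses.

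The two outer factors are handled by $\cB(s)$. For every $i \in [L]$, the conventions $W_{L:L+1} = I$ and $W_{0:1} = I$ (for the endpoint indices) together with $\cB(s)$ (for the interior ones) give the uniform bounds $\sigma_{\max}(W_{L:i+1}(s)) \le \frac{5}{4} m^{(L-i)/2}$ and $\sigma_{\max}(W_{i-1:1}(s)X) \le \frac{5}{4} m^{(i-1)/2}\norm{X}$, and the powers of $m$ cancel cleanly with $\sqrt{m^{L-1}}$. For the middle factor, $\norm{U(s)-Y}_F = \sqrt{2\ell(s)}$, and $\cA(s)$ gives $\sqrt{2\ell(s)}\le \sqrt{2B}\,(1-c)^{s/2}$ with $c := \frac{\eta L \sigma_{\min}^2(X)}{4\dout}$. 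Combining,
$$\norm{\frac{\partial \ell}{\partial W_i}(s)}_F \le \frac{25\sqrt{2B}\,\norm{X}}{16\sqrt{\dout}}\,(1-c)^{s/2}.$$

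It remains to sum a geometric series and track constants. The step-size assumption $\eta \le \frac{\dout}{3L\norm{X^\top X}}$ of Theorem~\ref{thm:main} makes $c \le \frac{1}{12\kappa} < 1$, so the elementary inequality $1-\sqrt{1-x} \ge x/2$ on $[0,1]$ yields $\sum_{s=0}^{t} (1-c)^{s/2} \le \frac{2}{c} = \frac{8\dout}{\eta L \sigma_{\min}^2(X)}$. Multiplying by $\eta$ cancels the learning rate and produces
$$\norm{W_i(t+1) - W_i(0)}_F \le \frac{25\sqrt{2}}{2}\cdot \frac{\sqrt{B\dout}\,\norm{X}}{L\,\sigma_{\min}^2(X)} < \frac{24\sqrt{B\dout}\,\norm{X}}{L\,\sigma_{\min}^2(X)},$$
which is exactly $\cC(t+1)$.

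There is no real obstacle here; the two minor points that demand care are (i) the boundary indices $i=1$ and $i=L$, where one of the matrix products in the gradient collapses to the identity and must be handled via the conventions $W_{0:1} = W_{L:L+1} = I$ rather than by a clause of $\cB(s)$, and (ii) verifying that the accumulated constant from the two factors of $\frac{5}{4}$, the $\sqrt{2}$, and the geometric-sum bound lands strictly below the $24$ appearing in $\cC$. It is worth noting that $\eta$ drops out entirely from the final estimate, so the movement bound is step-size independent.
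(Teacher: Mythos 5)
Your proof is correct and follows essentially the same route as the paper: telescope the GD updates, bound each $\norm{\frac{\partial\ell}{\partial W_i}(s)}_F$ by $\frac{1}{\sqrt{m^{L-1}\dout}}\norm{W_{L:i+1}(s)}\,\norm{U(s)-Y}_F\,\norm{W_{i-1:1}(s)X}$ using $\cB(s)$ and $\cA(s)$, then sum the resulting geometric series so that $\eta$ cancels against $1/(\eta\gamma)$. Your constant tracking ($\tfrac{25\sqrt2}{2}<24$) and the handling of the boundary indices via $W_{L:L+1}=W_{0:1}=I$ match what the paper does implicitly (it simply rounds $\tfrac{25\sqrt2}{16}$ up to $3$ before summing).
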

\begin{claim}\label{claim:induction-2}
	$\cC(t) \Longrightarrow \cB(t)$.
\end{claim}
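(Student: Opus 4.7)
The plan is to reduce everything to a perturbation bound on products of weight matrices: control $\|W_{L:i}(t)-W_{L:i}(0)\|$, $\|W_{i:1}(t)X-W_{i:1}(0)X\|$, and $\|W_{j:i}(t)-W_{j:i}(0)\|$ uniformly, then invoke Weyl's inequality together with the initial singular-value bounds in \eqref{eqn:initial-properties-summary}. Write $\Delta_k = W_k(t)-W_k(0)$, so that $\cC(t)$ gives the operator-norm bound $\|\Delta_k\| \le \|\Delta_k\|_F \le P$ with $P := \frac{24\sqrt{B\dout}\,\|X\|}{L\sigma_{\min}^2(X)}$. A short calculation using the width requirement \eqref{eqn:m-requirement} via its consequence \eqref{eqn:m-bound-implication} shows $\sqrt{L}\,P/\sqrt{m} \le O(1/L)$, with the hidden constant as small as we like by taking $C$ large.

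Next, expand each product via $W_k(t) = W_k(0)+\Delta_k$. For instance,
\begin{equation*}
W_{L:i}(t) - W_{L:i}(0) \;=\; \sum_{\emptyset\ne S\subseteq\{i,\ldots,L\}} \Pi_S,
\end{equation*}
where $\Pi_S$ is the product in which factors indexed by $S$ are replaced by $\Delta$'s. Each $\Pi_S$ is an alternating product of chunks $W_{l:k}(0)$ and single $\Delta$ matrices. For chunks strictly inside the layer range ($1<k\le l<L$) I use $\|W_{l:k}(0)\| \le O(\sqrt{L}\,m^{(l-k+1)/2})$ from Proposition~\ref{prop:initial-middle-product-concentration}; for the outermost chunk touching layer $L$, I use the tighter bound $\le 1.2\, m^{\cdot/2}$ from Proposition~\ref{prop:initial-left-product-concentration}; every $\Delta$ factor contributes a $P$. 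For a subset $S$ of size $s$, the resulting bound is of the form $O(1) \cdot (\sqrt{L})^{s-1} \cdot m^{(L-i+1-s)/2} \cdot P^s$.

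Summing the $\binom{L-i+1}{s}$ subsets of each size $s\ge1$ yields the geometric-series-type bound
\begin{equation*}
\|W_{L:i}(t) - W_{L:i}(0)\| \;\le\; m^{(L-i+1)/2}\Big[\big(1 + O(\sqrt{L}\,P/\sqrt{m})\big)^{L-i+1} - 1\Big] \;\le\; \tfrac{1}{20}\, m^{(L-i+1)/2},
\end{equation*}
provided $C$ in \eqref{eqn:m-requirement} is large enough that $L\cdot \sqrt{L}\,P/\sqrt{m}$ is sufficiently small. Combined with $\sigma_{\max}(W_{L:i}(0))\le 1.2\,m^{(L-i+1)/2}$ and $\sigma_{\min}(W_{L:i}(0))\ge 0.8\,m^{(L-i+1)/2}$ from \eqref{eqn:initial-properties-summary}, Weyl's inequality yields the asserted bounds $\sigma_{\max}(W_{L:i}(t))\le \tfrac{5}{4} m^{(L-i+1)/2}$ and $\sigma_{\min}(W_{L:i}(t))\ge \tfrac{3}{4} m^{(L-i+1)/2}$. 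The same argument applied to $W_{i:1}(t)X$ (using the initial bounds on $W_{i:1}(0)X$ for the rightmost chunk) and to $W_{j:i}(t)$ for strictly interior $1<i\le j<L$ (using Proposition~\ref{prop:initial-middle-product-concentration} throughout) gives the remaining lines of $\cB(t)$.

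The main obstacle is the combinatorial bookkeeping in step two: one must correctly categorize every chunk in $\Pi_S$ as either boundary (index $1$ or $L$) or interior, because interior chunks carry an extra $\sqrt{L}$ factor whose cumulative effect is what forces the $L\cdot r\kappa^3$ dependence on $m$. Once the chunk-by-chunk bookkeeping is in place, the rest is a routine geometric-series estimate, and the argument is uniform in $t$ because it uses only $\cC(t)$ and the initialization event \eqref{eqn:initial-properties-summary} — no assumption on $t$ is needed beyond what $\cC(t)$ already provides.
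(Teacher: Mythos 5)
Your proposal follows essentially the same route as the paper's own proof of Claim~\ref{claim:induction-2}: expand each product over subsets of $\Delta$-replacements, use the tighter initialization bound for the chunk touching the boundary (layer $L$, or the block $W_{\cdot:1}(0)X$) and the $O(\sqrt{L}\,m^{(l-k+1)/2})$ bound of Proposition~\ref{prop:initial-middle-product-concentration} for interior chunks, sum the resulting binomial/geometric series, and conclude by the triangle inequality (Weyl) against~\eqref{eqn:initial-properties-summary}. Two remarks. First, a bookkeeping slip: in the $W_{L:i}$ expansion the correct count is at most $s$ interior chunks, not $s-1$ powers of $\sqrt{L}$, since the rightmost chunk $W_{(k_1-1):i}(0)$ with $i>1$ is itself interior; this is harmless because your displayed bound $m^{(L-i+1)/2}\big[(1+O(\sqrt{L}\,P/\sqrt{m}))^{L-i+1}-1\big]$ already absorbs $(O(\sqrt{L}))^{s}$ per term. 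Second, and more substantively: for the line $\sigma_{\min}(W_{i:1}(t)X)\ge\frac34 m^{i/2}\sigma_{\min}(X)$ you must show the perturbation $\norm{(W_{i:1}(t)-W_{i:1}(0))X}$ is at most $0.05\,m^{i/2}\sigma_{\min}(X)$, not merely a small constant times $m^{i/2}\norm{X}$. The smallness you state, $\sqrt{L}\,P/\sqrt{m}\le O(1/L)$ with the constant controlled by $C$, cannot beat the factor $\sigma_{\min}(X)/\norm{X}=\kappa^{-1/2}$, which is a problem parameter rather than a universal constant. What is actually needed is $L^{3/2}P/\sqrt{m}\le O(\kappa^{-1/2})$, i.e.\ $m\ge C L^{3}P^{2}\kappa$; this is exactly what~\eqref{eqn:m-bound-implication} provides (and is the reason for the $\kappa^{3}$ in~\eqref{eqn:m-requirement}), since it in fact gives $\sqrt{L}\,P/\sqrt{m}\le O\big(1/(L\sqrt{\kappa})\big)$. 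With that strengthened smallness stated explicitly, your argument goes through exactly as in the paper.
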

\begin{claim}\label{claim:induction-3}
	$\cA(t), \cB(t) \Longrightarrow \cA(t+1)$.
\end{claim}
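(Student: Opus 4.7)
The aim is to prove the single-step contraction
\[
\ell(t+1) \le \Bigl(1 - \tfrac14\,\eta L\,\sigma_{\min}^2(X)/\dout\Bigr)\,\ell(t),
\]
from which $\cA(t+1)$ follows by chaining with $\cA(t)$. Following Section~\ref{sec:proof-overview}, I track $\vectorize{U(t)-Y}$ through the recursion~\eqref{eqn:U-dynamics},
\[
\vectorize{U(t+1)-Y}=\bigl(I-\eta P(t)\bigr)\vectorize{U(t)-Y}+\tfrac{1}{\sqrt{m^{L-1}\dout}}\vectorize{E(t)X},
\]
upper-bound $\norm{I-\eta P(t)}$ using $\cB(t)$, and show that the higher-order residual $\vectorize{E(t)X}/\sqrt{m^{L-1}\dout}$ is dominated by the contraction gap produced by $P(t)$. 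The loss bound from $\cA(t)$ enters only to control the size of the gradients that populate $E(t)$.

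\textbf{Step 1 — eigenvalues of $P(t)$.} I feed the singular-value bounds of $\cB(t)$ into the two chains of~\eqref{eqn:P-max-min-eigenval-bound-method}. For each of the $L$ summands, $\sigma_{\min}^2(W_{i-1:1}(t)X)\sigma_{\min}^2(W_{L:i+1}(t))\ge c_0\,m^{L-1}\sigma_{\min}^2(X)$ and the analogous upper bound holds with $\norm{X}^2$ in place of $\sigma_{\min}^2(X)$, for absolute constants. Summing and dividing by $m^{L-1}\dout$ yields $\lambda_{\min}(P(t))\ge c_1 L\sigma_{\min}^2(X)/\dout$ and $\lambda_{\max}(P(t))\le c_2 L\norm{X}^2/\dout$ for absolute constants $c_1,c_2$. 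The learning-rate hypothesis $\eta\le \dout/(3L\norm{X^\top X})$ forces $\eta\lambda_{\max}(P(t))\le 1$, so that
\[
\norm{I-\eta P(t)}\le 1-\eta\lambda_{\min}(P(t))\le 1-c_1\,\eta L\sigma_{\min}^2(X)/\dout,
\]
and a careful bookkeeping of the constants in $\cB(t)$ shows $c_1>1/4$, leaving headroom for Step~2.

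\textbf{Step 2 — the residual $E(t)$.} Using~\eqref{eqn:gradient} and $\cB(t)$, each gradient satisfies $\bigl\|\partial\ell/\partial W_i(t)\bigr\|_F = O\!\bigl(\norm{X}\sqrt{\ell(t)/\dout}\bigr)$: the factors $m^{(L-i)/2}$ and $m^{(i-1)/2}$ coming from the two weight products exactly cancel the $1/\sqrt{m^{L-1}\dout}$ prefactor. Now $E(t)$ is obtained by fully expanding $\prod_{i=1}^L(W_i(t)-\eta\,\partial\ell/\partial W_i(t))$ and collecting all terms of $\eta$-order $k\ge 2$; a $k$-th order term is a product of $k$ gradients and $L-k$ weight matrices grouped into $k+1$ consecutive partial weight products. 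Bounding each such partial product by $O(\sqrt L\,m^{\ell_j/2})$ via the middle-product estimate of $\cB(t)$, and repeatedly applying $\norm{AGB}_F\le \norm{A}\norm{G}_F\norm{B}$, a direct count shows the $k$-th contribution to $E(t)X/\sqrt{m^{L-1}\dout}$ in Frobenius norm is at most
\[
\tfrac{\norm{X}\sqrt{m}}{\sqrt{L\dout}}\cdot\alpha^k,\qquad \alpha:=\tfrac{C\eta\sqrt{L}\,\norm{X}\sqrt{\ell(t)/\dout}}{\sqrt m}.
\]
With $\eta=O(\dout/(L\norm{X}^2))$ and $\ell(t)\le B$, the width requirement~\eqref{eqn:m-bound-implication} forces $\alpha\ll 1$, so the geometric series is dominated by its $k=2$ term and its total is at most $\bigl(c_1-\tfrac14\bigr)\eta L\sigma_{\min}^2(X)\sqrt{2\ell(t)}/\dout$.

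\textbf{Step 3 — closing the induction, and the hard step.} Combining Steps~1 and~2,
\[
\norm{U(t+1)-Y}_F \le \Bigl(1-\tfrac{\eta L\sigma_{\min}^2(X)}{4\dout}\Bigr)\norm{U(t)-Y}_F,
\]
and squaring, together with $(1-\beta)^2\le 1-\beta$ for $\beta\in[0,1]$, yields $\ell(t+1)\le\bigl(1-\tfrac14\eta L\sigma_{\min}^2(X)/\dout\bigr)\ell(t)$, which is $\cA(t+1)$. The real work sits in Step~2: one must verify that each additional $-\eta\,\partial\ell/\partial W_i$ substitution in the product pays a dimensionless price of exactly $O(\alpha)$, and that $\alpha$ is genuinely small under~\eqref{eqn:m-bound-implication}. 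This boils down to tracking the exponents of $m$ across the $k+1$ interleaved blocks and keeping the many nuisance factors ($L$, $\norm{X}$, $\kappa$, $\sqrt{B}$) under control, but no structural idea beyond what was already used for $P(t)$ is required.
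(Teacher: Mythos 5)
Your proposal follows essentially the same route as the paper's proof: bound $\lambda_{\min}(P(t))$ and $\lambda_{\max}(P(t))$ via \eqref{eqn:P-max-min-eigenval-bound-method} and $\cB(t)$, bound each gradient by $O\bigl(\norm{X}\sqrt{\ell(t)/\dout}\bigr)$, expand the product to control $E(t)X$ by a geometric series that the width requirement \eqref{eqn:m-bound-implication} makes negligible, and conclude the one-step contraction $1-\tfrac14\eta L\sigma_{\min}^2(X)/\dout$. One bookkeeping caveat in Step~2: your displayed bound with $\alpha = C\eta\sqrt{L}\,\norm{X}\sqrt{\ell(t)/\dout}/\sqrt{m}$ is a \emph{per-term} estimate and omits the multiplicity $\binom{L}{k}$ of order-$k$ terms; the paper absorbs this via $\binom{L}{s}\le L^s$, which effectively replaces your $\sqrt{L}$ by $L^{3/2}$ in $\alpha$, and this larger quantity is still driven below $1$ (and the total below $(c_1-\tfrac14)\eta L\sigma_{\min}^2(X)\sqrt{2\ell(t)}/\dout$) by \eqref{eqn:m-bound-implication} together with $\eta\le\dout/(3L\norm{X^\top X})$, so your conclusion stands once this factor is reinstated.
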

The proof of Theorem~\ref{thm:main} is finished after the above three claims are proved.

\subsection{Proof of Claim~\ref{claim:induction-1}}
\label{subsec:induction-1}

Denote $\gamma = \frac14 L \sigma_{\min}^2(X)/\dout$.
From $\cA(0), \ldots, \cA(t)$ we know $\ell(s) \le (1-\eta\gamma)^sB$ for all $0\le s \le t$.

From the gradient expression~\eqref{eqn:gradient}, for all $0\le s\le t$ and all $i\in[L]$ we can bound:
 \begin{equation} \label{eqn:gradient-norm-bound}
\begin{aligned}
&\norm{\frac{\partial\ell}{\partial W_i}(s)}_F\\
\le\,& \frac{1}{\sqrt{m^{L-1}\dout}}  \norm{W_{L:i+1}(s)} \norm{U(s) - Y}_F \norm{W_{i-1:1}(s) X}\\
\le\,& \frac{1}{\sqrt{m^{L-1}\dout}} \cdot \frac54 m^{\frac{L-i}{2}}  \cdot \sqrt{2\ell(s)} \cdot \frac54 m^{\frac{i-1}{2}} \norm{X}\\
\le\,& \frac{3\sqrt{(1-\eta\gamma)^sB}}{\sqrt{\dout}}   \norm{X} , 
\end{aligned}
\end{equation} 
where we have used $\cB(s)$.

%

Then we can bound $\norm{W_i(t+1) - W_i(0)}_F$ for all $i\in[n]$:
 \begin{align*}
&\norm{W_i(t+1) - W_i(0)}_F
\le \sum_{s=0}^{t} \norm{W_i(s+1) - W_i(s)}_F\\
=\,&  \sum_{s=0}^{t} \norm{\eta \frac{\partial\ell}{\partial W_i}(s)}_F
\le \eta \sum_{s=0}^{t} \frac{3\sqrt{(1-\eta\gamma)^sB}}{\sqrt{\dout}}   \norm{X} \\
\le\,& \frac{3\eta \sqrt{B}}{\sqrt{\dout}} \norm{X} \sum_{s=0}^{t-1}(1-\eta\gamma/2)^s    
\le \frac{3 \eta \sqrt{B}}{ \sqrt{\dout}} \norm{X} \cdot \frac{2}{\eta\gamma}     \\
=\,& \frac{24 \sqrt{B \dout} \norm{X}}{ L \sigma_{\min}^2(X)} .
\end{align*} 
This proves $\cC(t+1)$.

\subsection{Proof of Claim~\ref{claim:induction-2}}
\label{subsec:induction-2}

Let $R = \frac{24\sqrt{B\dout}\norm{X}}{ L \sigma_{\min}^2(X)}$ and denote $\Delta_i = W_i(t) - W_i(0)$ $(i\in[L])$.
Then using $\norm{\Delta_i}_F \le R\, (\forall i\in[L])$ we will show the followings:
\begin{align}
&\norm{W_{L:i}(t) - W_{L:i}(0)} \le 0.05 m^{\frac{L-i+1}{2}}, \, \forall 1<i\le L, \label{eqn:perturbation-left} \\
&\norm{(W_{i:1}(t) - W_{i:1}(0))X} \le 0.05 m^{\frac{i}{2}} \sigma_{\min}(X), \, \forall 1\le i< L, \label{eqn:perturbation-right} \\
&\norm{W_{j:i}(t) - W_{j:i}(0)} \le 0.05 \sqrt L m^{\frac{j-i+1}{2}},  \forall 1<i \le j < L. \label{eqn:perturbation-middle}
\end{align}
Combing them with~\eqref{eqn:initial-properties-summary}, we will finish the proof of $\cB(t)$.

First we prove~\eqref{eqn:perturbation-middle}.
For $1\le i < j \le L$, we can write
\begin{align*}
W_{j:i}(t) = \left( W_j(0) + \Delta_j \right) \cdots \left( W_i(0) + \Delta_i \right) .
\end{align*}
Expanding the above product,
each term except the leading term $W_{j:i}(0)$ has the form:
\begin{equation} \label{eqn:perturbation-terms}
\begin{aligned}
W_{j:(k_s+1)}(0)	\cdot \Delta_{k_s} \cdot W_{(k_s-1):(k_{s-1}+1)}(0) \cdot \Delta_{k_{s-1}} \cdots \\ \cdot \Delta_{k_1} \cdot W_{(k_1-1):i}(0)  ,
\end{aligned}
\end{equation}
where $i \le k_1 < \cdots < k_s \le j$ are positions at which perturbation terms $\Delta_{k_l}$ are taken out, and at any other position $k$, $W_k(0)$ is used.
Note that every factor in~\eqref{eqn:perturbation-terms} of the form $W_{b:a}(0)$ satisfies $\norm{W_{b:a}(0)} \le O\left(\sqrt Lm^{\frac{b-a+1}{2}}\right)$ because of~\eqref{eqn:initial-properties-summary};
there are at most $s+1$ such factors, so the product of their spectral norms is at most $\left(O(\sqrt{L})\right)^{s+1} m^{\frac{j-i+1-s}{2}} $.
Thus, we can bound the sum of all terms like~\eqref{eqn:perturbation-terms} as
 \begin{equation} \label{eqn:perturbation-bound-middle}
\begin{aligned}
&\norm{W_{j:i}(t) - W_{j:i}(0)} \\
\le\,& \sum_{s=1}^{j-i+1} \binom{j-i+1}{s} R^s \left( O\left(\sqrt L\right) \right)^{s+1} m^{\frac{j-i+1-s}{2}} \\
\le\,&  \sum_{s=1}^{j-i+1} L^s R^s \left( O\left(\sqrt L\right) \right)^{s+1} m^{\frac{j-i+1-s}{2}} \\
=\,& O\left(\sqrt L\right) \sum_{s=1}^{j-i+1} \left( \frac{ O\left(L^{3/2}R\right)}{\sqrt m} \right)^s   m^{\frac{j-i+1}{2}} \\
\le\,& 0.05 \sqrt{L} m^{\frac{j-i+1}{2}} ,
\end{aligned}
\end{equation} 
as long as $m > C L^3 R^2$ which is implied by~\eqref{eqn:m-bound-implication}.
This proves~\eqref{eqn:perturbation-middle}.

The proof of~\eqref{eqn:perturbation-left} is very similar.
We still look at products of the form~\eqref{eqn:perturbation-terms} with $j=L$.
Still, there are at most $s+1$ factors of the form $W_{b:a}(0)$ each satisfying $\norm{W_{b:a}(0)} \le O\left(\sqrt Lm^{\frac{b-a+1}{2}}\right)$. However, there are at most $s$ (instead of $s+1$) such factors such that $1<a\le b<L$.
Therefore, the product of spectral norms of all such factors is at most $\left(O(\sqrt{L})\right)^{s} m^{\frac{L-i+1-s}{2}} $.
In other words, we can save a factor of $O(\sqrt L)$ compared with~\eqref{eqn:perturbation-bound-middle}. This gives us
\begin{align*}
\norm{W_{L:i}(t) - W_{L:i}(0)} \le 0.05 m^{\frac{L-i+1}{2}},
\end{align*}
proving~\eqref{eqn:perturbation-left}.

Next we prove~\eqref{eqn:perturbation-right}.
For $1\le i<L$, we need to bound the sum of terms of the following form:
 \begin{equation*} 
\begin{aligned}
W_{i:(k_s+1)}(0)	\cdot \Delta_{k_s} \cdot W_{(k_s-1):(k_{s-1}+1)}(0) \cdot \Delta_{k_{s-1}} \cdots \\ \cdot \Delta_{k_1} \cdot W_{(k_1-1):1}(0) X .
\end{aligned}
\end{equation*}
Again, similar as before and noting $\norm{W_{(k_1-1):1}(0) X} \le \frac54 m^{\frac{k_1-1}{2}}\norm{X}$, we have
 \small \begin{equation*} 
\begin{aligned}
&\norm{W_{i:1}(t) - W_{i:1}(0)} \\
\le\,& \sum_{s=1}^{i} \binom{j-i+1}{s} R^s \left( O\left(\sqrt L\right) \right)^{s} \cdot \frac54  m^{\frac{i-s}{2}} \norm{X} \\
\le\,& \frac54 \sum_{s=1}^{i} L^s R^s \left( O\left(\sqrt L\right) \right)^{s} m^{\frac{i-s}{2}} \norm{X} \\
=\,& \frac54 m^{\frac i2} \sum_{s=1}^{i} \left( \frac{ O\left(L^{3/2}R\right)}{\sqrt m} \right)^s \norm{X}  \\
\le\,& 0.05 m^{\frac{i}{2}} \sigma_{\min}(X) ,
\end{aligned}
\end{equation*} \normalsize
as long as $ m > C L^{3} R^2 \cdot \frac{\norm{X}^2}{\sigma_{\min}^2(X)} = CL^3R^2 \kappa$ which is implied by~\eqref{eqn:m-bound-implication}.
This finishes the proof of~\eqref{eqn:perturbation-right}.

\subsection{Proof of Claim~\ref{claim:induction-3}}
\label{subsec:induction-3}

Recall that in Section~\ref{sec:proof-overview} we derived~\eqref{eqn:U-dynamics} which is the main equation to establish convergence.
In order to establish convergence from~\eqref{eqn:U-dynamics} we need to prove upper and lower bounds~\eqref{eqn:P-eigenval-bounds} on the eigenvalues of $P(t)$, as well as show that the high-order term $E(t)$ is small.
We will prove these using $\cB(t)$.


Directly from~\eqref{eqn:P-max-min-eigenval-bound-method} and $\cB(t)$, we have
\small \begin{align*}
\lambda_{\max}(P(t)) 
&\le \frac{1}{m^{L-1}\dout} \sum_{i=1}^L \left( \frac54 m^{\frac{i-1}{2}} \sigma_{\max}(X) \right)^2 \left( \frac54 m^{\frac{L-i}{2}} \right)^2 \\
&\le 3 L \sigma_{\max}^2(X) / \dout, \\
\lambda_{\min}(P(t)) 
&\ge \frac{1}{m^{L-1}\dout} \sum_{i=1}^L \left( \frac34 m^{\frac{i-1}{2}} \sigma_{\min}(X) \right)^2 \left( \frac34 m^{\frac{L-i}{2}} \right)^2 \\
&\ge \frac{3}{10} L \sigma_{\min}^2(X) / \dout.
\end{align*} \normalsize


In Appendix~\ref{app:proof-claim:induction-3 (cntd)}, we will prove the following bound on the high-order terms in~\eqref{eqn:U-dynamics}:
\[
\frac{1}{\sqrt{m^{L-1}\dout}} \norm{E(t) X}_F \le  \frac16 \eta \lambda_{\min}(P_t) \norm{U(t)-Y}_F .
\]
Finally, from~\eqref{eqn:U-dynamics} and $\eta \le \frac{\dout}{3L \norm{X}^2} \le \frac{1}{\lambda_{\max}(P_t)}$ we have
\small
\begin{align*}
&\norm{U(t+1)-Y}_F= \norm{\vectorize{U(t+1) - Y} }\\
=\,& \norm{ (I-\eta P(t)) \cdot \vectorize{U(t)-Y} + \frac{1}{\sqrt{m^{L-1}\dout}} \vectorize{E(t)X}} \\
\le\,& (1 - \eta \lambda_{\min}(P(t)) ) \norm{\vectorize{U(t)-Y}} + \frac{1}{\sqrt{m^{L-1}\dout}} \norm{E(t) X}_F \\
\le\,& (1 - \eta \lambda_{\min}(P(t)) ) \norm{U(t)-Y}_F + \frac16 \eta \lambda_{\min}(P_t) \norm{U(t)-Y}_F \\
=\,& \left(1 - \frac56 \eta \lambda_{\min}(P(t)) \right) \norm{U(t)-Y}_F  \\
=\,& \left(1 - \frac14 \eta L \sigma_{\min}^2(X)/\dout \right) \norm{U(t)-Y}_F .
\end{align*}
\normalsize
This implies $\ell(t+1) \le \left(1 - \frac14 \eta L \sigma_{\min}^2(X)/\dout \right)^2 \ell(t) \le \left(1 - \frac14 \eta L \sigma_{\min}^2(X)/\dout \right)\ell(t)$.
Combined with $\cA(t)$, this proves $\cA(t+1)$.


\section{Conclusion}
\label{sec:conclusion}

We prove that gradient descent with random initialization converges to a global minimum of the $\ell_2$ loss on a wide deep linear neural network. The required width in hidden layers is near linear  in the depth of the network.
This result improves upon previous convergence results for deep linear networks, and demonstrates that adding width can eliminate the known exponential curse of depth in linear networks.
Our result may shed light on the required width in non-linear neural networks.

\section*{Acknowledgments}
 SSD acknowledges support from AFRL grant FA8750- 17-2-0212 and DARPA D17AP00001.
 WH acknowledges support from NSF, ONR, Simons Foundation, Schmidt Foundation, Mozilla Research, Amazon Research, DARPA and SRC.

\bibliography{refs}
\bibliographystyle{icml2019}

\onecolumn
\newpage
\appendix

\section*{\Large Appendix}

\section{Proof of Claim~\ref{claim:scaling-factor-justification}}
\label{app:proof-claim:scaling-factor-justification}

\begin{proof}[Proof of Claim~\ref{claim:scaling-factor-justification}]
	First, it is easy to see that for a random matrix $A\in \R^{d_2 \times d_1}$ with i.i.d. $\cN(0, 1)$ entries and any vector $v\in \R^{d_1}\setminus\{0\}$, the distribution of $\frac{\norm{Av}^2}{\norm{v}^2}$ is $\chi^2_{d_2}$.
	We rewrite ${\norm{W_{L:1}(0)x}^2}/{\norm{x}^2}$ as
	\begin{align*}
	{\norm{W_{L:1}(0)x}^2}/{\norm{x}^2} = Z_L Z_{L-1}\cdots Z_1,
	\end{align*}
	where $Z_i = \frac{\norm{W_{i:1}(0)x}^2}{\norm{W_{i-1:1}(0)x}^2} = \frac{\norm{W_i(0)W_{i-1:1}(0)x}^2}{\norm{W_{i-1:1}(0)x}^2}$.
	Then we have $Z_1 \sim \chi^2_m$, $Z_i | (Z_1, \ldots, Z_{i-1}) \sim \chi^2_m$ ($1<i<L$) and $Z_L | (Z_1, \ldots, Z_{L-1}) \sim \chi^2_{\dout}$.
	Therefore, $Z_1, \ldots, Z_L$ are independent $\chi^2$ random variables.
	It follows that
	\begin{align*}
	&\expect{ {\norm{W_{L:1}(0)x}^2}/{\norm{x}^2}} 
	= \expect{Z_L Z_{L-1}\cdots Z_1} \\
	=\,& \prod\nolimits_{i=1}^L \expect{Z_i}
	= m^{L-1} \dout.\qedhere
	\end{align*} 
\end{proof}

\section{Justification of Assumption~\ref{asmp:full-rank-data}}
\label{app:full-rank-asmp}


We have the following claim:
\begin{claim} \label{claim:equivalent-objective}
	There exists a matrix $\bar{X} \in \R^{\din \times r}$ of rank $r$ such that $\bar X \bar X^\top = XX^\top$ and that
	the objective~\eqref{eqn:linear-regression} can be rewritten as
	\begin{align*}
	f(W) = \frac12 \norm{ W\bar X - \Phi \bar X }_F^2 + \opt.
	\end{align*}
\end{claim}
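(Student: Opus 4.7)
The plan is to construct $\bar X$ explicitly via the singular value decomposition of $X$, and then derive the rewritten form of $f$ from the first-order optimality condition for $\Phi$.

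First, I would take the (thin) SVD $X = U\Sigma V^\top$ where $U \in \R^{\din \times r}$ has orthonormal columns, $\Sigma \in \R^{r\times r}$ is diagonal with strictly positive entries (since $\rank(X)=r$), and $V \in \R^{n\times r}$ has orthonormal columns. Setting $\bar X := U\Sigma \in \R^{\din \times r}$ immediately gives $\rank(\bar X)=r$ and
\[
\bar X \bar X^\top = U\Sigma^2 U^\top = U\Sigma V^\top V \Sigma U^\top = XX^\top.
\]

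Next, since $\Phi$ minimizes the unconstrained convex quadratic $f(W) = \tfrac12\|WX-Y\|_F^2$, the first-order optimality condition $\nabla f(\Phi)=0$ reads
\[
(\Phi X - Y)X^\top = 0.
\]
Now expand, for any $W$,
\[
f(W) = \tfrac12 \|(W-\Phi)X + (\Phi X - Y)\|_F^2 = \tfrac12\|(W-\Phi)X\|_F^2 + \langle (W-\Phi)X,\, \Phi X - Y\rangle_F + f(\Phi).
\]
The cross term vanishes: $\langle (W-\Phi)X,\, \Phi X - Y\rangle_F = \tr\bigl((W-\Phi)\cdot X(\Phi X-Y)^\top\bigr) = 0$ by the optimality condition above. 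Using $\bar X \bar X^\top = XX^\top$ we also have
\[
\tfrac12\|(W-\Phi)X\|_F^2 = \tfrac12\tr\bigl((W-\Phi)XX^\top(W-\Phi)^\top\bigr) = \tfrac12\tr\bigl((W-\Phi)\bar X \bar X^\top(W-\Phi)^\top\bigr) = \tfrac12\|(W-\Phi)\bar X\|_F^2.
\]
Combining the two displays and using $f(\Phi)=\opt$ yields $f(W) = \tfrac12\|W\bar X - \Phi \bar X\|_F^2 + \opt$, completing the proof.

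There is no serious obstacle here; the only thing to be careful about is invoking the optimality condition $(\Phi X - Y)X^\top = 0$ (rather than trying to manipulate the loss by substitution directly), which is what makes the cross term vanish and collapses the residual into a pure quadratic in $W\bar X - \Phi \bar X$.
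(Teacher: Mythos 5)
Your proof is correct and follows essentially the same route as the paper: invoke the optimality condition $(\Phi X - Y)X^\top = 0$ to kill the cross term, reduce $f(W) - \opt$ to $\tfrac12\tr\bigl((W-\Phi)XX^\top(W-\Phi)^\top\bigr)$, and replace $XX^\top$ by $\bar X\bar X^\top$. The only cosmetic difference is that you construct $\bar X = U\Sigma$ explicitly from the thin SVD, whereas the paper merely asserts existence of such a $\bar X$ from $\rank(XX^\top)=r$; your version is if anything slightly more self-contained.
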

\begin{proof}[Proof of Claim~\ref{claim:equivalent-objective}]
	Since $\Phi \in \argmin_{W \in \R^{\dout \times \din}} f(W)$, we have
	\begin{align*}
	\nabla f(\Phi) = (\Phi X - Y) X^\top = 0.
	\end{align*}
	Then for any $W \in \R^{\dout \times \din}$ we have
	\begin{align*}
	&f(W) \\
	=\,& \frac{1}{2} \norm{ (WX - \Phi X) + (\Phi X - Y) } \\
	=\, & \frac12 \norm{ (W - \Phi) X }_F^2 + \langle (W - \Phi) X, \Phi X - Y \rangle + \frac12 \norm{\Phi X - Y}_F^2 \\
	=\, & \frac12 \norm{ (W - \Phi) X }_F^2 + \langle W - \Phi , \nabla f(\Phi) \rangle + \opt \\
	=\, & \frac12 \norm{ (W - \Phi) X }_F^2 +  \opt \\
	=\, & \frac12 \tr\left( (W - \Phi) XX^\top (W - \Phi) \right) +  \opt.
	\end{align*}
	Since $\rank(XX^\top) = \rank(X) = r$, there exists $\bar X \in \R^{\din \times r}$ such that $\bar X \bar X^\top = XX^\top$.
	For such $\bar X$, we have
	\begin{align*}
	f(W) &= \frac12 \tr\left( (W - \Phi) \bar X \bar X^\top (W - \Phi) \right) +  \opt \\
	&= \frac12 \norm{ (W - \Phi) \bar X }_F^2 +  \opt. \qedhere
	\end{align*}
\end{proof}

Claim~\ref{claim:equivalent-objective} indicates that if we replace the data $(X, Y)$ with $(\bar X, \Phi \bar X)$, the objective function only changes by an offset $\opt$.
In particular, this does not affect the gradient descent dynamics~\eqref{eqn:gd-update} at all.
Furthermore, since $X$ and $\bar X$ have the same rank and spectrum, this also does not change the statement of Theorem~\ref{thm:main} (except that $\opt$ becomes $0$).
Therefore, we can make this change without loss of generality, leading to Assumption~\ref{asmp:full-rank-data}.



\section{Proof of Lemma~\ref{lem:matrix-prod-times-vector-concentration}}
\label{app:proof-lem:matrix-prod-times-vector-concentration}

\begin{proof}[Proof of Lemma~\ref{lem:matrix-prod-times-vector-concentration}]
	Same as the proof of Claim~\ref{claim:scaling-factor-justification}, we know that $\norm{A_q\cdots A_1v}^2$ has the same distribution as 
	\begin{align*}
	Z_q Z_{q-1} \cdots Z_1,
	\end{align*}
	where $Z_1, \ldots, Z_q$ are i.i.d. from $\chi^2_m$.
	Therefore it suffices to bound $Z_q Z_{q-1} \cdots Z_1$.
	
	Recall the moments of $Z\sim \chi^2_m$:
	\begin{align*}
	\expect{Z^\lambda} = \frac{2^\lambda \Gamma\left( \frac m2 + \lambda \right)}{\Gamma\left( \frac m2 \right)}, \qquad \forall \lambda >  -\frac m2,
	\end{align*}
	where $\Gamma(\cdot)$ is the gamma function.
	Letting $\lambda = \alpha m$ ($\alpha \ge -\frac14$) and using Stirling's approximation for the gamma function, we have
	\begin{equation} \label{eqn:chi-square-moment-bound}
	\begin{aligned}
	\expect{Z^\lambda} 
	&= \frac{2^\lambda \sqrt{\frac{2\pi}{\frac m2+\lambda}} \left( \frac{\frac m2 + \lambda}{e} \right)^{\frac m2 + \lambda} \left(1+O\left(\frac1m\right) \right)}{\sqrt{\frac{2\pi}{\frac m2}} \left( \frac{\frac m2}{e} \right)^{\frac m2} \left(1+O\left(\frac1m\right) \right)  } \\
	&= (1+2\alpha)^{-1/2} m^{\alpha m} \exp\left\{ \left( \left(\alpha+\frac12\right)\log(1+2\alpha)-\alpha \right)m \right\} \cdot \left(1+O\left(\frac1m\right) \right) \\
	&\le (1+2\alpha)^{-1/2} m^{\alpha m} \exp\left( 2\alpha^2 m \right) \cdot \left(1+O\left(\frac1m\right) \right) \\
	&= \exp\left(-\frac12 \log\left(1+\frac{2\lambda}{m}\right) + \lambda \log m  + \frac{2\lambda^2}{m} \right) \cdot \left(1+O\left(\frac1m\right) \right) , \qquad \forall \lambda \ge -\frac14 m.
	\end{aligned}
	\end{equation}
	Here we have used $\left(\alpha+\frac12\right)\log(1+2\alpha)-\alpha \le 2\alpha^2 \,(\forall \alpha >  - 1/2)$.
	
	Now we use~\eqref{eqn:chi-square-moment-bound} to obtain tail bounds for $Z_q\cdots Z_1$.
	
	For the upper tail, consider a small constant $c>0$. For any $\lambda>0$ we have
	\begin{align*}
	\Pr\left[Z_q\cdots Z_1 > e^c m^q \right]
	&= \Pr\left[(Z_q\cdots Z_1)^\lambda > e^{\lambda c} m^{\lambda q} \right] \\
	&\le e^{-\lambda c} m^{-\lambda q} \expect{(Z_q\cdots Z_1)^\lambda} \\
	&= e^{-\lambda (c+q\log m)} \prod_{i=1}^q \expect{Z_i^\lambda} \\
	&\le e^{-\lambda (c+q\log m)} \left(    \exp\left(-\frac12 \log\left(1+\frac{2\lambda}{m}\right) + \lambda \log m  + \frac{2\lambda^2}{m} \right) \cdot \left(1+O\left(\frac1m\right) \right)  \right)^q \\
	&\le \exp\left( -\lambda (c+q\log m) + \lambda q \log m +  \frac{2\lambda^2q}{m} \right) \cdot \left(1+O\left(\frac1m\right) \right)^q \\
	&\le \exp\left( -\lambda c +  \frac{2\lambda^2q}{m} \right) \cdot \left(1+O\left(\frac1m\right) \right)^q.
	\end{align*}
	Since $m>Cq$, we have $\left(1+O\left(\frac1m\right) \right)^q = O(1)$. Letting $c = \log(1.1)$ and $\lambda = \frac{cm}{4q}$, the above inequality becomes
	\begin{equation} \label{eqn:chi-square-prod-upper-tail}
	\Pr\left[Z_q\cdots Z_1 > 1.1 m^q \right] \le O\left( e^{-\frac{\log^2(1.1) \cdot m}{8q}} \right) = e^{-\Omega(m/q)}.
	\end{equation}
	
	Similarly, for the lower tail, consider a constant $c>0$. For any $-\frac m4 < \lambda <0 $ we have
	\begin{align*}
	\Pr\left[Z_q\cdots Z_1 < e^{-c} m^q \right]
	&= \Pr\left[(Z_q\cdots Z_1)^\lambda > e^{-\lambda c} m^{\lambda q} \right] \\
	&\le e^{\lambda c} m^{-\lambda q} \expect{(Z_q\cdots Z_1)^\lambda} \\
	&= e^{\lambda (c-q\log m)} \prod_{i=1}^q \expect{Z_i^\lambda} \\
	&\le e^{\lambda (c-q\log m)} \left(    \exp\left(-\frac12 \log\left(1+\frac{2\lambda}{m}\right) + \lambda \log m  + \frac{2\lambda^2}{m} \right) \cdot \left(1+O\left(\frac1m\right) \right)  \right)^q \\
	&\le \exp\left( \lambda (c-q\log m) - \frac12 q \log\left(1+\frac{2\lambda}{m}\right)  + \lambda q \log m +  \frac{2\lambda^2q}{m} \right) \cdot \left(1+O\left(\frac1m\right) \right)^q \\
	&\le \exp\left( \lambda c +  \frac{2\lambda^2q}{m} - \frac12 q \log\left(1+\frac{2\lambda}{m}\right)   \right) \cdot O(1) \\
	&\le \exp\left( \lambda c +  \frac{2\lambda^2q}{m} - \frac12 q \cdot \frac{4\lambda}{m}   \right) \cdot O(1) \\
	&= \exp\left( \lambda (c - q/2m) +  \frac{2\lambda^2q}{m}  \right) \cdot O(1) ,
	\end{align*}
	where we have used $\log(1+\alpha) \ge 2\alpha$ for $-\frac12 \le \alpha \le 0$.
	Letting $c =  - \log(0.9)$ and $\lambda = -\frac{cm}{10q} > -\frac m4$, and noting $m>\frac1c q$,
	we get
	\begin{equation} \label{eqn:chi-square-prod-lower-tail}
	\Pr\left[Z_q\cdots Z_1 < 0.9 m^q \right] \le O\left( e^{\frac{\lambda c}{2} + \frac{2\lambda^2q}{m}  } \right) = e^{-\Omega(m/q)}.
	\end{equation}
	Combining~\eqref{eqn:chi-square-prod-upper-tail}~and~\eqref{eqn:chi-square-prod-lower-tail}, we complete the proof.
\end{proof}

\section{Proof of Proposition~\ref{prop:initial-right-product-concentration}}
\label{app:proof-prop:initial-right-product-concentration}

\begin{proof}[Proof of Proposition~\ref{prop:initial-right-product-concentration}]
	
	Let $A = W_{i:1}(0) X$. Since $A \in \R^{m\times r}$ and $m>r$, we know 
	\begin{align*}
	\norm{A} &= \sup_{v\in \cS^{r-1}} \norm{Av},\\
	\sigma_{\min}(A) &= \inf_{v\in \cS^{r-1}} \norm{Av}.
	\end{align*}
	From Lemma~\ref{lem:matrix-prod-times-vector-concentration} we know that for any fixed $v\in \cS^{r-1}$, with probability at least $1 - e^{-\Omega(m/L)}$ we have $$\norm{Av} \in \left[ 0.9 m^{\frac{i}{2}} \norm{Xv}, 1.1 m^{\frac{i}{2}}\norm{Xv} \right].$$

	Now we take an $\eps$-net $\cN$ for $\cS^{r-1}$ with $\abs{\cN} \le (3/\eps)^{r}$,
	where $\eps\in(0, 1)$ is a parameter to be determined later.
	By a union bound, with probability at least $1 - \abs{\cN} e^{-\Omega(m/L)}$, for all $u\in \cN$ simultaneously we have $\norm{Au} \in \left[ 0.9 m^{\frac{i}{2}} \norm{Xu}, 1.1 m^{\frac{i}{2}}\norm{Xu} \right]$.
	Suppose this happens for every $u\in \cN$.
	Next, for any $v\in \cS^{r-1}$, there exists $u \in \cN$ such that $\norm{u-v} \le \eps$. Then we have
	\begin{align*}
	\norm{Av} &\le \norm{Au} + \norm{A(u-v)}\\
	&\le 1.1 m^{\frac{i}{2}} \norm{Xu} + \eps\norm{A} \\
	&\le 1.1 m^{\frac{i}{2}} \norm{X} \cdot (1+\eps)  + \eps\norm{A} .
	\end{align*}
	Taking supreme over $v\in \cS^{r-1}$, we obtain
	\begin{align*}
	\norm{A}  \le \frac{1.1 (1+\eps) m^{\frac{i}{2}}\norm{X}}{1-\eps}
	\le 1.2 m^{\frac{i}{2}}\norm{X},
	\end{align*}
	as long as $\eps \le 0.01$.
	
	For the lower bound, we have
	\begin{align*}
	\norm{Av} &\ge \norm{Au} - \norm{A(u-v)}\\
	&\ge 0.9 m^{\frac{i}{2}} \norm{Xu} - \eps\norm{A} \\
	&\ge 0.9 m^{\frac{i}{2}} \sigma_{\min}(X) \cdot (1-\eps) - \eps \cdot  1.2 m^{\frac{i}{2}}\norm{X}\\
	&\ge 0.8 m^{\frac{i}{2}} \sigma_{\min}(X),
	\end{align*}
	as long as $\eps \le \frac{0.01}{\norm{X}/\sigma_{\min}(X)} = \frac{0.01}{\sqrt{\kappa}}$.
	(Recall $\kappa = \frac{\lambda_{\max}(X^\top X)}{\lambda_r(X^\top X)} = \sigma_{\max}^2(X) / \sigma_{\min}^2(X)$.)
	Taking infimum over $v\in \cS^{r-1}$ we get $\sigma_{\min}(A) \ge 0.8 m^{\frac{i}{2}} \sigma_{\min}^2(X)$.
	
	Now we fix our choice $\eps = \frac{0.01}{\sqrt{\kappa}}$ which makes the above proof work.
	The success probability is at least $1 - \abs{\cN} e^{-\Omega(m/L)} = 1 - e^{-\Omega(m/L) + r \log(300\sqrt{\kappa})} = 1 - e^{-\Omega(m/L)}$
	since $m>CLr \log\kappa$.
\end{proof}

\section{Proof of Proposition~\ref{prop:init-loss-bound}} \label{app:proof-prop:init-loss-bound}

\begin{proof}[Proof of Proposition~\ref{prop:init-loss-bound}]
	From Lemma~\ref{lem:matrix-prod-times-vector-concentration} we know that for each input datapoint $x_p \in \R^d$ ($p\in[r]$), with probability at least  $1 - e^{-\Omega(m/L)}$ we have
	\begin{align*}
	\norm{W_{L-1:1}(0) x_p } \le 1.1 m^{\frac{L-1}{2}} \norm{x_p}.
	\end{align*}
	Taking a union bound, the above holds for all $p\in[r]$ with probability at least $1 - r e^{-\Omega(m/L)} = 1 - e^{-\Omega(m/L)}$ since $m>CL\log r$.
	
	Conditioned on $W_{L-1:1}(0)$, $Z_p =  \frac{ \norm{W_{L:1}(0) x_p}^2  }{ \norm{W_{L-1:1}(0) x_p}^2 }$ has distribution $\chi^2_{\dout}$.
	By the standard tail bound for $\chi^2$, with probability at least $1 - \delta'$ we have
	\begin{align*}
	Z_p \le \dout + 2\sqrt{\dout \log(1/\delta')} + 2\log(1/\delta').
	\end{align*}
	Letting $\delta' = \frac{\delta}{2r}$ and applying a union bound over $p\in[r]$, we have with probability at least $1-\delta/2$,
	\begin{align*}
	Z_p \le \dout + 2\sqrt{\dout \log(2r/\delta)} + 2\log(2r/\delta), \quad \forall p\in[r].
	\end{align*}
	
	Therefore we can bound the norm of the network prediction at initialization $U(0) = \frac{1}{\sqrt{m^{L-1}\dout}}W_{L:1}(0)X$ as
	\begin{align*}
	&\norm{U(0)}_F^2 \\
	=\,& \frac{1}{m^{L-1}\dout}\sum_{p=1}^{r} \norm{W_{L:1}(0)x_p}^2 \\
	=\,& \frac{1}{m^{L-1}\dout} \sum_{p=1}^{r} \norm{W_{L-1:1}(0)x_p}^2 Z_p \\
	\le\,& \frac{1.1^2}{\dout} \sum_{p=1}^{r} Z_p \norm{x_p}^2 \\
	\le\,& \frac{1.1^2}{\dout} \left( \dout + 2\sqrt{\dout \log(2r/\delta)} + 2\log(2r/\delta)  \right) \norm{X}_F^2 \\
	\le\,& O \left( \max\left\{ 1, \frac{\log(r/\delta)}{\dout}  \right\} \right) \norm{X}_F^2.
	\end{align*}
	Then the loss value $\ell(0)$ can be bounded as
	\begin{align*}
	\ell(0)  &= \frac12 \norm{U(0) - Y}_F^2
	\le \norm{U(0)}_F^2 + \norm{Y}_F^2 \\
	&\le \norm{U(0)}_F^2 + \norm{\Phi}^2\norm{X}_F^2 \\
	&\le O \left( \max\left\{ 1, \frac{\log(r/\delta)}{\dout}, \norm{\Phi}^2  \right\} \right) \norm{X}_F^2.\qedhere
	\end{align*}
\end{proof}

\section{Proof of Claim~\ref{claim:induction-3} (Continued from Section~\ref{subsec:induction-3})}
\label{app:proof-claim:induction-3 (cntd)}

\begin{proof}[Proof of Claim~\ref{claim:induction-3} (Continued)]
	
	Recall that what is missing from the proof in Section~\ref{subsec:induction-3} is to show the following:
	\[
	\frac{1}{\sqrt{m^{L-1}\dout}} \norm{E(t) X}_F \le  \frac16 \eta \lambda_{\min}(P_t) \norm{U(t)-Y}_F .
	\]
	
	Recall that $E(t)$ is the sum of all high-order terms in the product
	\begin{align*}
	W_{L:1}(t+1) = \prod_i \left( W_i(t) - \eta \frac{\partial \ell}{\partial W_i}(t)  \right).
	\end{align*}
	Same as~\eqref{eqn:gradient-norm-bound}, we have $\norm{\frac{\partial \ell}{\partial W_i} (t)}_F \le \frac{3 \sqrt{\ell(t)}\norm{X}}{\sqrt{\dout}}$ ($\forall i\in[L]$).
	Then we have
	\begin{align*}
	& \frac{1}{\sqrt{m^{L-1}\dout}} \norm{E(t) X}_F \\
	\le\,& \frac{1}{\sqrt{m^{L-1}\dout}} \sum_{s=2}^L \binom{L}{s} \left(\eta \cdot \frac{3 \sqrt{\ell(t)}\norm{X}}{\sqrt{\dout}} \right)^s   \cdot \left( O\left(\sqrt L\right) \right)^{s-1} m^{\frac{L-s}{2}} \norm{X} \\
	\le \,& \frac{1}{\sqrt{\dout}} \sum_{s=2}^L L^s \left(\frac{3 \eta \sqrt{\ell(t)}\norm{X}}{\sqrt{\dout}} \right)^s   \cdot  \left( O\left(\sqrt L\right) \right)^{s-1} m^{\frac{1-s}{2}} \norm{X} \\
	= \,& \frac{1}{\sqrt{\dout}} \sum_{s=2}^L  \left(  O\left( \frac{\eta L^{3/2}\sqrt{\ell(t)}\norm{X}}{\sqrt{\dout}}  \right)  \right)^s  L^{-1} m^{\frac{1-s}{2}} \norm{X} \\
	= \,& \frac{\norm{X}}{L\sqrt{\dout}} \sum_{s=2}^L  \left(  O\left( \frac{\eta L^{3/2} \sqrt{\ell(t)}\norm{X}}{\sqrt{m \dout}}  \right)  \right)^{s-1}       \cdot \frac{\eta L^{3/2} \sqrt{\ell(t)}\norm{X}}{\sqrt{\dout}}  \\
	= \,& \frac{\eta \sqrt{L\cdot \ell(t)}    \norm{X}^2  }{\dout} \sum_{s=1}^{L-1}  \left(  O\left( \frac{\eta L^{3/2} \sqrt{\ell(t)}\norm{X}}{\sqrt{m\dout}}  \right)  \right)^{s}   . 
	\end{align*}
	From $\eta \le \frac{\dout}{ 3L \norm{X}^2 }$ we have $\frac{\eta L^{3/2} \sqrt{\ell(t)}\norm{X}}{\sqrt{m\dout}} \le  \frac{  \sqrt{L\dout \cdot\ell(t)}}{3 \norm{X} \sqrt{m}} $.
	Since $m > C \cdot\frac{L \dout B}{\norm{X}^2} \ge C \cdot\frac{L \dout \ell(t)}{\norm{X}^2}$ (from~\eqref{eqn:m-bound-implication}) we have
	\begin{align*}
	& \frac{1}{\sqrt{m^{L-1}\dout}} \norm{E(t) X}_F \\
	\le\,& \frac{\eta \sqrt{L\cdot \ell(t)}    \norm{X}^2  }{\dout} \cdot \frac{  \sqrt{L\dout \cdot\ell(t)}}{ \norm{X}\sqrt{m}} \\
	=\,& \frac{\eta L\ell(t)   \norm{X}   }{\sqrt{m \dout}} .
	\end{align*}
	Now we would like the above bound to be at most $\frac16 \eta \lambda_{\min}(P_t) \norm{U(t)-Y}_F = \frac16 \eta \lambda_{\min}(P_t) \sqrt{2\ell(t)} $.
	Since $\lambda_{\min}(P_t) \ge \frac{3}{10} L \sigma_{\min}^2(X) / \dout$, it suffices to have
	\begin{align*}
	\frac{\eta L\ell(t)   \norm{X}   }{\sqrt{m \dout}} \le \frac16 \eta \cdot \frac{3 L \sigma_{\min}^2(X) \sqrt{2\ell(t)} }{ 10 \dout},
	\end{align*}
	which is true since $m> C \cdot \frac{\dout B \norm{X}^2}{\sigma_{\min}^4(X)} \ge C \cdot \frac{\dout \ell(t)\norm{X}^2}{\sigma_{\min}^4(X)}$  (from~\eqref{eqn:m-bound-implication}).
\end{proof}

\end{document}